\documentclass{article}
\usepackage[final]{colm2026_conference}

\usepackage[utf8]{inputenc} 
\usepackage[T1]{fontenc}    
\usepackage{hyperref}       
\usepackage{url}            
\usepackage{booktabs}       
\usepackage{amsfonts}       
\usepackage{wrapfig,lipsum}
\usepackage{amsthm,mathrsfs,amsmath,amssymb}
\usepackage{caption}
\usepackage{subcaption} 
\usepackage{graphicx,graphics}
\usepackage{svg}
\usepackage{relsize}
\usepackage[noend]{algpseudocode}  
\usepackage{algorithm}
\usepackage{algorithmicx}
\usepackage{bbm}
\usepackage{bm}
\usepackage{tikz}
\usetikzlibrary{3d, calc}
\usepackage{svg}
\usepackage{xspace}
\graphicspath{{figures/}}
\usepackage[capitalize,noabbrev]{cleveref}
\usepackage{xcolor}
\usepackage[table,dvipsnames]{xcolor}
\usepackage{paralist} 
\usepackage{multirow, makecell, colortbl, hhline}
\usepackage{dirtytalk}

\newcommand{\ourmethod}{\texttt{ADMM-Q}\xspace}

\definecolor{aurometalsaurus}{rgb}{0.43, 0.5, 0.5}
\definecolor{britishracinggreen}{rgb}{0.0, 0.26, 0.15}
\definecolor{burntumber}{rgb}{0.54, 0.2, 0.14}
\definecolor{cobalt}{rgb}{0.0, 0.28, 0.67}
\definecolor{bulgarianrose}{rgb}{0.28, 0.02, 0.03}
\definecolor{ceruleanblue}{rgb}{0.16, 0.32, 0.75}
\definecolor{darkgreen}{RGB}{0,128,0}
\definecolor{darkred}{rgb}{0.74, 0.2, 0.24}

\newcommand{\argmin}{\operatornamewithlimits{\arg\min}}

\newtheorem{thm}{Theorem}

\usepackage[dvipsnames]{xcolor}
\usepackage{tikz}
\usetikzlibrary{shapes.geometric, arrows}
\usetikzlibrary{positioning}
\usetikzlibrary {arrows.meta}
\usepackage{mathtools}
\usetikzlibrary{decorations.pathreplacing,calligraphy}
\usepackage{enumitem}
\usepackage{siunitx}
\newcommand{\crnew}[1]{#1}

\title{
  ADMM-Q: An Improved Hessian-based Weight Quantizer for Post-Training Quantization of Large Language Models
}

\author{
\makebox[\textwidth][c]{%
\begin{tabular}{c}
Ryan Lucas\thanks{Equal contribution.},\;
Mehdi Makni\footnotemark[1],\;
Xiang Meng\footnotemark[1],\;
Adam Deng,\;
Rahul Mazumder \\[0.35em]
Operations Research Center \\
Massachusetts Institute of Technology
\end{tabular}%
}
}

\begin{document}
\pagestyle{fancy}
\fancyhead{}
\lhead{Published as a conference paper at COLM 2026}
\renewcommand{\headrulewidth}{1.5pt}
\fancyfoot{}
\cfoot{\thepage}
\maketitle
\begin{abstract}

Quantization is an effective strategy to reduce the storage and computation footprint of large language models (LLMs). Post-training quantization (PTQ) is a leading approach for compressing LLMs. Popular weight quantization procedures, including GPTQ and RTN, suffer in model utility, especially at aggressive quantization levels (sub-4-bit). We propose \ourmethod, a novel weight quantization algorithm that considers the layer-wise quantization problem. Our algorithm is based on a combinatorial variant of the Alternating Direction Method of Multipliers (ADMM). 
Our operator-splitting procedure updates weights continuously to minimize the layer-wise reconstruction error, while gradually enforcing the quantization constraints with convergence guarantees.
We propose additional algorithmic enhancements (e.g., penalty scheduling, preconditioning, and a local search post-processing step) to make \ourmethod efficient at LLM scale. \ourmethod is modular and can be used as a drop-in replacement for any weight quantizer within existing quantization pipelines: \ourmethod is fully composable with existing techniques including range clipping, learned or random rotations, and activation scaling. Using \ourmethod in place of GPTQ on Qwen3-8B, we decrease WikiText-2 perplexity in: (i) the W3A16 weight-only setting (12.85 $\rightarrow$ 10.06); (ii) the W4A8 SmoothQuant procedure (9.29 $\rightarrow$ 8.68); and (iii) the W2A4KV4 SpinQuant procedure (66.11 $\rightarrow$ 19.42). 

\end{abstract}

\vspace{-1em}
\section{Introduction}
\label{sec:introduction}
\vspace{-0.5em}
Large Language Models (LLMs) have demonstrated exceptional performance across diverse tasks including complex reasoning \citep{xu2025towards}, text generation \citep{achiam2023gpt}, mathematical problem-solving \citep{deepmindmath}, and code synthesis \citep{roziere2023code}. However, state-of-the-art LLMs~\citep{achiam2023gpt,dubey2024llama,team2023gemini} with billions of parameters face substantial deployment challenges due to their computational and memory requirements. These constraints limit real-time applications and deployment on resource-constrained devices, making model compression essential for increasing LLM accessibility while preserving accuracy.

Quantization~\citep{han2015deep,frantar2022gptq,lin2024awq} has emerged as a practical compression approach for LLMs, reducing weight precision from 16-bit floating point to 4 or fewer bits with minimal accuracy degradation. Weight-only quantization (e.g., W4A16) compresses model storage and memory bandwidth while keeping activations in full precision; weight-and-activation quantization (e.g., W4A4) additionally quantizes activations for faster matrix multiplications \cite{xiao2023smoothquant, liu2024spinquant}. In practice, most post-training quantization (PTQ) pipelines process the model layer by layer, minimizing the reconstruction error $\|\mathbf{X}\widehat{\mathbf{W}} - \mathbf{X}\mathbf{W}\|_F^2$ between the original and quantized layer outputs on a small calibration set. The two standard weight quantizers used in this step are round-to-nearest (RTN) and GPTQ~\citep{frantar2022gptq}.

RTN independently rounds each weight to its nearest quantization grid point post clipping value selection, ignoring inter-weight dependencies entirely. While computationally efficient, this approach is suboptimal because it fails to account for the structure of the reconstruction error landscape---weights with large Hessian entries receive no special treatment. GPTQ addresses this by incorporating Hessian information, but it processes weights greedily in a column-by-column fashion: after quantizing each column, the remaining weights are updated to compensate for the incurred error. This sequential procedure is sensitive to processing order and accumulates errors across columns.

We propose \ourmethod, a principled weight quantizer that formulates layer-wise quantization as a constrained optimization problem and solves it via the Alternating Direction Method of Multipliers (ADMM)~\citep{boyd2011distributed}. Unlike GPTQ's greedy column-wise approach, \ourmethod jointly optimizes all weights simultaneously by alternating between a continuous Hessian-weighted update and a discrete quantization projection.
\ourmethod is designed as a \emph{drop-in replacement} for RTN or GPTQ within any existing PTQ pipeline, including weight-only quantization pipelines or those that employ rotation-based~\citep{liu2024spinquant} or scaling-based~\citep{xiao2023smoothquant,lin2024awq} transformations in addition to learned clipping heuristics (e.g., MSE clipping) for weight-and-activation quantization. Quantization to low bit-widths poses a fundamentally hard discrete optimization problem, and as our experiments demonstrate, the choice of solver has a substantial impact on the resulting model utility-compression trade-off.

\vspace{-0.8em}
\begin{figure}[!h]
\centering
\begin{tikzpicture}[scale=0.65, transform shape, every node/.style={inner sep=0pt, outer sep=0pt, scale=1.25}]

\definecolor{qA}{RGB}{230,241,251}
\definecolor{qB}{RGB}{181,212,244}
\definecolor{qC}{RGB}{133,183,235}
\definecolor{qD}{RGB}{55,138,221}

\definecolor{cardbg}{RGB}{248,248,248}
\definecolor{panelbg}{RGB}{252,252,252}
\definecolor{lightgrid}{RGB}{198,198,198}
\definecolor{accent}{RGB}{210,75,50}

\draw[rounded corners=5pt, fill=cardbg, draw=black!50, line width=0.85pt]
  (0.2,0.2) rectangle (5.0,-5.0);

\node[anchor=west] at (0.35,-0.40) {\small \textbf{Layerwise reconstruction}};
\draw[black!22, line width=0.5pt] (0.35,-0.7) -- (4.8,-0.7);

\node[align=center] at (2.55,-2.95)
  {$\displaystyle
    \min_{{\mathbf W}\in\mathcal Q}
    \ \|\mathbf Y-\mathbf X{\mathbf W}\|_F^2
  $};

\node[align=center] at (2.55,-1.6)
{\scriptsize Quantize the layer weights to\\[2pt]
   \scriptsize points on a quantization grid};

\node[align=center] at (2.55,-4.25)
{\scriptsize $\mathbf X$: input activations\\[3pt]
   \scriptsize $\mathbf Y$: output activations};

\begin{scope}[shift={(0,1.2)}]
\draw[rounded corners=3pt, fill=cardbg, draw=black!60]
  (0.85,-6.60) rectangle (4.55,-7.90);
\node[anchor=west] at (1.0,-6.90) {\scriptsize Quantization grid (2-bits)};

\draw[black!55, line width=0.5pt, ->] (1.55,-7.45) -- (3.95,-7.45);

\fill[qA,draw=black!55] (1.75,-7.3) circle (1.8pt);
\node[anchor=north] at (1.75,-7.60) {\scriptsize $q_{00}$};

\fill[qB,draw=black!55] (2.35,-7.3) circle (1.8pt);
\node[anchor=north] at (2.35,-7.60) {\scriptsize $q_{01}$};

\fill[qC,draw=black!55] (2.95,-7.3) circle (1.8pt);
\node[anchor=north] at (2.95,-7.60) {\scriptsize $q_{10}$};

\fill[qD,draw=black!55] (3.55,-7.3) circle (1.8pt);
\node[anchor=north] at (3.55,-7.60) {\scriptsize $q_{11}$};
\end{scope}

\draw[line width=0.95pt, double distance=1pt,
      arrows={-Latex[length=0pt 3 0]}] (5.15,-2.85) -- (6.30,-2.85);
\node at (5.7,-2.5) {\scriptsize ADMM};

\begin{scope}[shift={(-0.3,-0.25)}]
\draw[rounded corners=4pt, fill=panelbg, draw=black!22]
  (6.75,0.80) rectangle (11.4,-6.65);

\fill[qA] (7.05,0.45) rectangle (8.05,-0.55);
\fill[qD] (8.05,0.45) rectangle (9.05,-0.55);
\fill[qB] (9.05,0.45) rectangle (10.05,-0.55);
\fill[qD] (10.05,0.45) rectangle (11.05,-0.55);

\fill[qB] (7.05,-0.55) rectangle (8.05,-1.55);
\fill[qA] (8.05,-0.55) rectangle (9.05,-1.55);
\fill[qD] (9.05,-0.55) rectangle (10.05,-1.55);
\fill[qA] (10.05,-0.55) rectangle (11.05,-1.55);

\fill[qA] (7.05,-1.55) rectangle (8.05,-2.55);
\fill[qB] (8.05,-1.55) rectangle (9.05,-2.55);
\fill[qA] (9.05,-1.55) rectangle (10.05,-2.55);
\fill[qC] (10.05,-1.55) rectangle (11.05,-2.55);

\fill[qD] (7.05,-2.55) rectangle (8.05,-3.55);
\fill[qA] (8.05,-2.55) rectangle (9.05,-3.55);
\fill[qC] (9.05,-2.55) rectangle (10.05,-3.55);
\fill[qD] (10.05,-2.55) rectangle (11.05,-3.55);

\fill[qC] (7.05,-3.55) rectangle (8.05,-4.55);
\fill[qD] (8.05,-3.55) rectangle (9.05,-4.55);
\fill[qA] (9.05,-3.55) rectangle (10.05,-4.55);
\fill[qB] (10.05,-3.55) rectangle (11.05,-4.55);

\fill[qB] (7.05,-4.55) rectangle (8.05,-5.55);
\fill[qC] (8.05,-4.55) rectangle (9.05,-5.55);
\fill[qD] (9.05,-4.55) rectangle (10.05,-5.55);
\fill[qA] (10.05,-4.55) rectangle (11.05,-5.55);

\draw[line width=0.75pt] (7.05,0.45) rectangle (11.05,-5.55);
\draw[line width=0.75pt] (8.05,0.45) -- (8.05,-5.55);
\draw[line width=0.75pt] (9.05,0.45) -- (9.05,-5.55);
\draw[line width=0.75pt] (10.05,0.45) -- (10.05,-5.55);
\draw[line width=0.75pt] (7.05,-0.55) -- (11.05,-0.55);
\draw[line width=0.75pt] (7.05,-1.55) -- (11.05,-1.55);
\draw[line width=0.75pt] (7.05,-2.55) -- (11.05,-2.55);
\draw[line width=0.75pt] (7.05,-3.55) -- (11.05,-3.55);
\draw[line width=0.75pt] (7.05,-4.55) -- (11.05,-4.55);

\node[align=center] at (9.05,-6.15)
  {\scriptsize Quantized $\mathbf{W}$ from ADMM};
\end{scope}
\draw[line width=0.95pt, double distance=1pt,
      arrows={-Latex[length=0pt 3 0]}] (11.20,-2.85) -- (12.35,-2.85);
\node[align=center] at (11.7,-2.2)
  {\scriptsize Local\\[-1pt] \scriptsize search};
  
\begin{scope}[shift={(-0.6,-0.25)}]
\draw[rounded corners=4pt, fill=panelbg, draw=black!22]
  (13.05,0.80) rectangle (20.75,-6.75);

\fill[black!3] (13.45,0.45) rectangle (17.45,-5.55);

\fill[qA,opacity=0.42] (13.45,0.45) rectangle (14.45,-0.55);
\fill[qD,opacity=0.42] (14.45,0.45) rectangle (15.45,-0.55);
\fill[qB,opacity=0.42] (15.45,0.45) rectangle (16.45,-0.55);
\fill[qD,opacity=0.42] (16.45,0.45) rectangle (17.45,-0.55);

\fill[qB,opacity=0.42] (13.45,-0.55) rectangle (14.45,-1.55);
\fill[qA,opacity=0.42] (14.45,-0.55) rectangle (15.45,-1.55);
\fill[qD,opacity=0.42] (15.45,-0.55) rectangle (16.45,-1.55);
\fill[qA,opacity=0.42] (16.45,-0.55) rectangle (17.45,-1.55);

\fill[qA,opacity=0.42] (13.45,-1.55) rectangle (14.45,-2.55);
\fill[qB,opacity=0.42] (14.45,-1.55) rectangle (15.45,-2.55);
\fill[qA,opacity=0.42] (15.45,-1.55) rectangle (16.45,-2.55);
\fill[qC,opacity=0.42] (16.45,-1.55) rectangle (17.45,-2.55);

\fill[qD,opacity=0.42] (13.45,-2.55) rectangle (14.45,-3.55);
\fill[qA,opacity=0.42] (14.45,-2.55) rectangle (15.45,-3.55);
\fill[qC,opacity=0.42] (15.45,-2.55) rectangle (16.45,-3.55);
\fill[qD,opacity=0.42] (16.45,-2.55) rectangle (17.45,-3.55);

\fill[qC,opacity=0.42] (13.45,-3.55) rectangle (14.45,-4.55);
\fill[qD,opacity=0.42] (14.45,-3.55) rectangle (15.45,-4.55);
\fill[qA,opacity=0.42] (15.45,-3.55) rectangle (16.45,-4.55);
\fill[qB,opacity=0.42] (16.45,-3.55) rectangle (17.45,-4.55);

\fill[qB,opacity=0.42] (13.45,-4.55) rectangle (14.45,-5.55);
\fill[qC,opacity=0.42] (14.45,-4.55) rectangle (15.45,-5.55);
\fill[qD,opacity=0.42] (15.45,-4.55) rectangle (16.45,-5.55);
\fill[qA,opacity=0.42] (16.45,-4.55) rectangle (17.45,-5.55);

\draw[draw=lightgrid, line width=0.75pt] (13.45,0.45) rectangle (17.45,-5.55);
\draw[draw=lightgrid, line width=0.75pt] (14.45,0.45) -- (14.45,-5.55);
\draw[draw=lightgrid, line width=0.75pt] (15.45,0.45) -- (15.45,-5.55);
\draw[draw=lightgrid, line width=0.75pt] (16.45,0.45) -- (16.45,-5.55);
\draw[draw=lightgrid, line width=0.75pt] (13.45,-0.55) -- (17.45,-0.55);
\draw[draw=lightgrid, line width=0.75pt] (13.45,-1.55) -- (17.45,-1.55);
\draw[draw=lightgrid, line width=0.75pt] (13.45,-2.55) -- (17.45,-2.55);
\draw[draw=lightgrid, line width=0.75pt] (13.45,-3.55) -- (17.45,-3.55);
\draw[draw=lightgrid, line width=0.75pt] (13.45,-4.55) -- (17.45,-4.55);

\node[align=center] at (15.45,-6.15)
  {\scriptsize Final quantized $\mathbf{W}$\\[-1pt]\scriptsize after local search};

\fill[qB] (14.45,-2.55) rectangle (15.45,-3.55); 
\fill[qD] (14.45,-4.55) rectangle (15.45,-5.55); 
\draw[draw=accent, line width=1.0pt] (14.45,-2.55) rectangle (15.45,-3.55);
\draw[draw=accent, line width=1.0pt] (14.45,-4.55) rectangle (15.45,-5.55);

\fill[qC] (16.45,0.45) rectangle (17.45,-0.55);  
\fill[qB] (16.45,-0.55) rectangle (17.45,-1.55); 
\draw[draw=accent, line width=1.0pt] (16.45,0.45) rectangle (17.45,-0.55);
\draw[draw=accent, line width=1.0pt] (16.45,-0.55) rectangle (17.45,-1.55);

\node[anchor=west] at (17.8,-5.05) {\scriptsize Discrete local moves};

\draw[rounded corners=3pt, fill=white, draw=black!28]
  (18.05,0.2) rectangle (20.35,-2.0);

\fill[qD] (18.2,-0.05) rectangle (18.8,-0.65);
\draw[black!35] (18.2,-0.05) rectangle (18.8,-0.65);
\fill[qC] (19.6,-0.05) rectangle (20.2,-0.65);
\draw[black!35] (19.6,-0.05) rectangle (20.2,-0.65);

\draw[-Latex, line width=0.35pt] (18.85,-0.35) -- (19.55,-0.35);

\node at (18.5,-0.8) {\tiny $q_{11}$};
\node at (19.9,-0.8) {\tiny $q_{10}$};

\fill[qA] (18.2,-1.05) rectangle (18.8,-1.65);
\draw[black!35] (18.2,-1.05) rectangle (18.8,-1.65);
\fill[qB] (19.6,-1.05) rectangle (20.2,-1.65);
\draw[black!35] (19.6,-1.05) rectangle (20.2,-1.65);

\draw[-Latex, line width=0.35pt] (18.85,-1.35) -- (19.55,-1.35);

\node at (18.5,-1.8) {\tiny $q_{00}$};
\node at (19.9,-1.8) {\tiny $q_{01}$};

\begin{scope}[shift={(0,-0.6)}]

\draw[rounded corners=3pt, fill=white, draw=black!28]
  (18.05,-1.8) rectangle (20.35,-4.0);

\fill[qA] (18.2,-2.05) rectangle (18.8,-2.65);
\draw[black!35] (18.2,-2.05) rectangle (18.8,-2.65);
\fill[qB] (19.6,-2.05) rectangle (20.2,-2.65);
\draw[black!35] (19.6,-2.05) rectangle (20.2,-2.65);

\draw[-Latex, line width=0.35pt] (18.85,-2.35) -- (19.55,-2.35);

\node at (18.5,-2.8) {\tiny $q_{00}$};
\node at (19.9,-2.8) {\tiny $q_{01}$};

\fill[qC] (18.2,-3.05) rectangle (18.8,-3.65);
\draw[black!35] (18.2,-3.05) rectangle (18.8,-3.65);
\fill[qD] (19.6,-3.05) rectangle (20.2,-3.65);
\draw[black!35] (19.6,-3.05) rectangle (20.2,-3.65);

\draw[-Latex, line width=0.35pt] (18.85,-3.35) -- (19.55,-3.35);

\node at (18.5,-3.8) {\tiny $q_{10}$};
\node at (19.9,-3.8) {\tiny $q_{11}$};
\end{scope}

\draw[accent!90, line width=0.45pt, -Latex]
  (15.45,-4.05) -- (18.10,-3.75);
\draw[accent!90, line width=0.45pt, -Latex]
  (17.45,-0.55) -- (18.1,-0.9);
\end{scope}

\end{tikzpicture}
\caption{\small{Overview of the proposed \ourmethod~algorithm. (\textbf{Left}) The layerwise quantization problem with a reconstruction objective; the goal is to approximate the full-precision weight matrix $\mathbf{W}$ using quantized weights (Section \ref{sec:problem-formulation}). (\textbf{Middle}) ADMM with diagonal scaling and $\rho$-update scheme (Algorithm \ref{alg:admm}) to obtain a high-quality quantized weight matrix (Section \ref{sec:admm-algorithm}). (\textbf{Right}) Starting from the ADMM solution, a local search procedure is applied to further improve the quantized weights and reduce the reconstruction error (Section \ref{sec:local-search}).}}
\label{fig:flowchart}
\end{figure}

Our contributions can be summarized as follows:
\begin{itemize}[leftmargin=*,itemsep=2pt,topsep=2pt]
\item We formulate layer-wise weight quantization as a Hessian-weighted constrained optimization problem and propose \ourmethod, an ADMM-based solver with diagonal scaling for numerical stability, an adaptive penalty schedule, and a local-search refinement step for improved discrete solutions (\cref{sec:problem-formulation}). Further, we extend the convergence proof of ADMM established under pruning constraints \cite{meng2024alps} to our setting involving quantization constraints (\cref{thm:admmq}).

\item In weight-only quantization, \ourmethod consistently improves over GPTQ across the Qwen3 model family in perplexity and zero-shot accuracy, with especially large gains in aggressive low-bit regimes such as W3 (\cref{tab:qwen3-architecture-sweep}).

\item When integrated into existing weight-and-activation quantization pipelines such as SpinQuant and SmoothQuant, \ourmethod continues to improve over GPTQ (\cref{tab:sq-llama-results,tab:qwen3-results}). Moreover, when deployed in vLLM it preserves identical inference throughput and memory benefits as GPTQ (for instance, $\approx$1.5x speedup at W8A8 on Qwen-32B; \cref{tab:qwen_latency_speedup_gptq_admmq}).
    
\end{itemize}

Our code is available at: \url{https://github.com/mazumder-lab/admm-q}.

\vspace{-1em}
\section{Related Work}
\label{sec:related-work}
\vspace{-0.5em}
\noindent\textbf{Weight-only quantizers of LLMs.}~~~PTQ compresses pre-trained models without retraining by reducing weight and/or activation precision. While activations are quantized on-the-fly during inference, weights are fixed during deployment and can be compressed offline using a specialized optimization procedure. A popular weight quantizer is GPTQ~\citep{frantar2022gptq}, which extends the optimal brain compression framework~\citep{frantar2022optimal,hassibi1992second} to LLM scale by processing weights column by column with Hessian-based error compensation. AWQ~\citep{lin2024awq} identifies salient weight channels via activation magnitudes and applies per-channel scaling before RTN. \crnew{LDLQ~\citep{chee2023quip}, the adaptive-rounding procedure introduced in QuIP and subsequently used in QuIP\#~\citep{tseng2024quipsharp}, implements rounding with linear feedback via the LDL decomposition of the Hessian.} \ourmethod builds on top of the data-aware layer-wise quantization approach. However, it differs from prior methods by using a joint ADMM optimization procedure as opposed to greedy sequential quantization for GPTQ/LDLQ and data-free methods like RTN.

\noindent\textbf{Rotation and scaling transformations.} Empirically LLM activations have a high presence of outliers \cite{xiao2023smoothquant}, which makes them difficult to quantize online. Several methods reduce quantization difficulty through equivalent model transformations applied before the weight quantizer. SmoothQuant~\citep{xiao2023smoothquant} migrates quantization difficulty from activations to weights via per-channel scaling. QuaRot~\citep{ashkboos2024quarot} applies random Hadamard rotations to reduce the effect of outliers in the quantized network, while SpinQuant~\citep{liu2024spinquant} learns the rotations, followed by RTN or GPTQ weight quantization. These transformation methods are orthogonal to the choice of weight quantizer; in our experiments, we demonstrate that replacing RTN or GPTQ with \ourmethod gives consistent improvements (\cref{sec:experimental-results}).

\noindent\textbf{ADMM for LLM compression.} Prior work has identified that designing joint optimization approaches under model compression constraints can outperform more heuristic methods. Under sparsity constraints, ADMM~\citep{boyd2011distributed,davis2016convergence} has been exceptionally successful, outperforming prior heuristic methods such as SparseGPT (\cite{frantar2023sparsegpt}) and Wanda \cite{sun2024simpleeffectivepruningapproach}. For pure sparsity, \cite{bovza2024fast} employs ADMM to recover optimal weights on a fixed sparsity support, while \cite{meng2024alps} use ADMM for both support identification and weight optimization. Further, \cite{makni2025basil} use ADMM to sparse-plus-low-rank decomposition.
However, ADMM under quantization constraints remains largely unexplored for LLM quantization due to the scale of LLMs and the hard combinatorial structure of the optimization constraints. ~\citet{pmlr-v130-huang21a} share the high-level idea of applying ADMM to quantization, but target a complementary setting.
They target ADMM for general discrete optimization and, in their main experiments, binarize vision networks from scratch against a global training loss.
In contrast, \ourmethod is a post-training LLM quantization method designed for the layer-wise reconstruction loss.

\vspace{-0.8em}
\section{ADMM-Q}
\label{sec:problem-formulation}
\vspace{-0.8em}

\subsection{Problem formulation}\label{sec:formulation}
\vspace{-0.5em}

As in GPTQ \citep{frantar2022gptq} and AWQ \citep{lin2024awq}, we compress the model layer by layer, minimizing the reconstruction error between the outputs of the pre-trained weights and the quantized weights on calibration data. For a given layer with pre-trained weight matrix $\widehat{\mathbf{W}} \in \mathbb{R}^{n \times p}$ ($n$ and $p$ correspond to input and output channels, respectively) and input activations $\mathbf{X} \in \mathbb{R}^{N \times n}$ collected from $N$ calibration samples, we solve:
\begin{equation}\label{eq:layer-recon}
\min_{\mathbf{W}} \quad \frac{1}{2}\|\mathbf{X}\widehat{\mathbf{W}} - \mathbf{X}\mathbf{W}\|_F^2 + \frac{\lambda}{2}\|\widehat{\mathbf{W}} - \mathbf{W}\|_F^2 \quad \text{s.t.} \quad \mathbf{W} \in \mathcal{Q}
\end{equation}
where $\mathcal{Q}$ denotes the set of quantized weight matrices (e.g., 4-bit integers with per-channel scale and zero-point) and $\lambda > 0$ is a regularization parameter. Defining the Hessian $\mathbf{H} = \mathbf{X}^\top\mathbf{X} + \lambda\mathbf{I}$, this is equivalent to:
\begin{equation}\label{eq:hessian-form}
\min_{\mathbf{W}} \quad \frac{1}{2}\mathrm{Tr}\left((\mathbf{W} - \widehat{\mathbf{W}})^\top \mathbf{H} (\mathbf{W} - \widehat{\mathbf{W}})\right) \quad \text{s.t.} \quad \mathbf{W} \in \mathcal{Q}
\end{equation}
where $\mathrm{Tr}(\cdot)$ denotes the trace of a matrix. This formulation captures the key challenge: finding quantized weights $\mathbf{W} \in \mathcal{Q}$ that minimize the Hessian-weighted distance to the pre-trained weights. Both RTN and GPTQ can be viewed as approximate solvers for this problem; RTN ignores $\mathbf{H}$ entirely (treating it as the identity), while GPTQ applies a greedy column-wise solver based on the optimal brain compression framework~\citep{hassibi1992second,frantar2022optimal}. 
ADMM is natural because it separates the smooth Hessian-weighted objective (continuous weight updates) from the discrete quantization constraint (enforcing the constraints gradually). 

\vspace{-0.75em}
\subsection{Diagonal scaling for numerical stability}\label{sec:scaling}
\vspace{-0.5em}
In practice, the activation norms $\|\mathbf{X}_{:,i}\|_2$ vary significantly across input dimensions, leading to a poorly conditioned Hessian $\mathbf{H}$ and unstable optimization. This issue is particularly severe in LLMs, where a small fraction of channels carry disproportionately large activation magnitudes, which is the well-documented ``channel outlier'' phenomenon~\citep{xiao2023smoothquant, dettmers2023spqr}. GPTQ sidesteps this issue in part because its greedy  procedure conditions on a single diagonal entry of the inverse Hessian at a time, so the condition number is less directly harmful. By contrast, \ourmethod jointly updates all weights simultaneously through a closed-form solution involving the full Hessian, making it directly sensitive to the condition number of~$\mathbf{H}$. Without preconditioning, outlier channels with large Hessian entries dominate the update, causing slow convergence and numerical instability.  We introduce a diagonal scaling that addresses this. Let $\bm{\Sigma} = \operatorname{Diag}(\mathbf{H})^{-1/2}$ and define the scaled variable $\mathbf{W}' = \bm{\Sigma}^{-1}\mathbf{W}$. Problem~\eqref{eq:hessian-form} becomes:
\begin{equation}\label{eq:scaled-form}
\min_{\mathbf{W}'} \quad \frac{1}{2}\mathrm{Tr}\Big((\mathbf{W}' - \bm{\Sigma}^{-1}\widehat{\mathbf{W}})^\top \underbrace{(\bm{\Sigma}^\top \mathbf{H}\, \bm{\Sigma})}_{\text{scaled Hessian}} (\mathbf{W}' - \bm{\Sigma}^{-1}\widehat{\mathbf{W}})\Big) \quad \text{s.t.} \quad \bm{\Sigma}\mathbf{W}' \in \mathcal{Q}
\end{equation}
The scaled Hessian $\bm{\Sigma}^\top\mathbf{H}\bm{\Sigma}$ has unit diagonal entries, which substantially reduces its condition number. This ensures all channels contribute comparably to the ADMM updates. For notational simplicity, we use $\mathbf{W}$, $\widehat{\mathbf{W}}$, and $\mathbf{H}$ to denote the scaled quantities, writing:
\begin{equation}\label{eq:admm-main}
\min_{\mathbf{W}} \quad \frac{1}{2}\mathrm{Tr}\left((\mathbf{W} - \widehat{\mathbf{W}})^\top \mathbf{H} (\mathbf{W} - \widehat{\mathbf{W}})\right) \quad \text{s.t.} \quad \bm{\Sigma}\mathbf{W} \in \mathcal{Q}
\end{equation}

In our ablation study (Table~\ref{tab:admmq-ablation-qwen3-8b-w3w4}), we find that diagonal preconditioning is one of the most impactful components of \ourmethod: removing it at W3A16 on Qwen3-8B increases WikiText-2 perplexity from 10.06 to 14.68, worse than the 12.85 of GPTQ. 

\vspace{-0.3em}
\subsection{ADMM algorithm}\label{sec:admm-algorithm}
\vspace{-0.3em}

To solve the non-convex problem~\eqref{eq:admm-main}, we employ ADMM. We introduce an auxiliary variable $\mathbf{D}$ to decouple the continuous optimization from the quantization constraint:
\begin{equation}\label{eq:admm-split}
\min_{\mathbf{W}, \mathbf{D}} \quad \frac{1}{2}\mathrm{Tr}\left((\mathbf{W} - \widehat{\mathbf{W}})^\top \mathbf{H} (\mathbf{W} - \widehat{\mathbf{W}})\right) + \mathbb{I}_{\mathcal{Q}}(\bm{\Sigma}\mathbf{D}) \quad \text{s.t.} \quad \mathbf{W} = \mathbf{D}
\end{equation}
where $\mathbb{I}_{\mathcal{Q}}(\cdot)$ is the indicator function for the quantization set. The augmented Lagrangian with dual variable $\mathbf{V}$ and penalty $\rho > 0$ is:
\begin{equation}\label{eq:aug-lagrangian}
L_\rho(\mathbf{W}, \mathbf{D}, \mathbf{V}) = \frac{1}{2}\mathrm{Tr}\left((\mathbf{W} - \widehat{\mathbf{W}})^\top \mathbf{H} (\mathbf{W} - \widehat{\mathbf{W}})\right) + \mathbb{I}_{\mathcal{Q}}(\bm{\Sigma}\mathbf{D}) + \langle \mathbf{V}, \mathbf{W} - \mathbf{D} \rangle + \frac{\rho}{2}\|\mathbf{W} - \mathbf{D}\|_F^2
\end{equation}
The ADMM updates at iteration $t$ are:
\begin{align}
\mathbf{W}^{(t+1)} &= (\mathbf{H} + \rho_t \mathbf{I})^{-1}(\mathbf{H}\widehat{\mathbf{W}} + \rho_t \mathbf{D}^{(t)} - \mathbf{V}^{(t)}) \label{eq:w-update} \\
\mathbf{D}^{(t+1)} &= \argmin_{\bm{\Sigma}\mathbf{D} \in \mathcal{Q}} \|\mathbf{D} - (\mathbf{W}^{(t+1)} + \mathbf{V}^{(t)}/\rho_t)\|_F^2 \label{eq:d-update} \\
\mathbf{V}^{(t+1)} &= \mathbf{V}^{(t)} + \rho_t(\mathbf{W}^{(t+1)} - \mathbf{D}^{(t+1)}) \label{eq:v-update}
\end{align}

\noindent\textbf{Efficient $\mathbf{W}$-update.}~~By pre-computing the eigendecomposition $\mathbf{H} = \mathbf{U}\Lambda\mathbf{U}^\top$ once, the matrix inverse in~\eqref{eq:w-update} can be evaluated as $(\mathbf{H} + \rho_t\mathbf{I})^{-1} = \mathbf{U}(\Lambda + \rho_t\mathbf{I})^{-1}\mathbf{U}^\top$ for any $\rho_t$, reducing each iteration to matrix multiplications.

\noindent\textbf{$\mathbf{D}$-update (quantization projection).}~~The closed form solution for the $\mathbf{D}$-update~\eqref{eq:d-update} reduces to RTN; it projects onto the quantization grid, accounting for the scaling $\bm{\Sigma}$. For a fixed uniform grid, each block/channel/tensor is solved independently. Given a target $\mathbf{\widetilde{W}}^{(t)} = \mathbf{W}^{(t+1)} + \mathbf{V}^{(t)}/\rho_t$, the quantized values are:
\begin{equation}\label{eq:projection}
\bm{\Sigma}\mathbf{D} = \Delta \cdot \operatorname{Round}\!\left(\frac{\bm{\Sigma}\mathbf{\widetilde{W}}^{(t)} - z_0}{\Delta}\right) + z_0
\end{equation}
where $\Delta$ is the quantization scale and $z_0$ is the zero-point. 

\paragraph{Grid refresh.}
Let $\theta=(\Delta, z_0)$ denote the scale and zero-point determining the quantization grid $\mathcal{Q}(\theta)$, and notice that $\mathbf{\widetilde{W}}^{(t)}$ is the continuous point that is projected in the $\mathbf{D}$-update. As in GPTQ, we initialize $\theta^{(0)}$ by RTN on the dense weights, but as ADMM moves the continuous iterate, this fixed grid can become stale for the current target $\mathbf{\widetilde{W}}^{(t)}$.
\par
\begin{wrapfigure}[16]{r}{0.52\linewidth}
\centering
\begin{tikzpicture}[
    x=0.52cm,
    y=0.48cm,
    >=Latex,
    every node/.style={font=\tiny}
]

  \draw[->, thick, black!70] (-0.4,0) -- (10.2,0)
    node[right, font=\tiny] {$w_i$};
  \draw[->, thick, black!70] (0,-0.4) -- (0,8.6)
    node[above, font=\tiny] {$w_j$};

  \foreach \x in {1.0,3.0,5.0,7.0} {
    \draw[gray!22, very thin] (\x,0.3) -- (\x,7.8);
  }
  \foreach \y in {1.2,3.2,5.2,7.2} {
    \draw[gray!22, very thin] (0.3,\y) -- (8.0,\y);
  }
  \foreach \x in {1.0,3.0,5.0,7.0} {
    \foreach \y in {1.2,3.2,5.2,7.2} {
      \fill[gray!55] (\x,\y) circle (1.8pt);
    }
  }

  \foreach \x in {2.0,4.0,6.0,8.0} {
    \draw[blue!18, very thin] (\x,0.3) -- (\x,7.8);
  }
  \foreach \y in {0.8,2.8,4.8,6.8} {
    \draw[blue!18, very thin] (1.3,\y) -- (9.0,\y);
  }
  \foreach \x in {2.0,4.0,6.0,8.0} {
    \foreach \y in {0.8,2.8,4.8,6.8} {
      \fill[blue!55!cyan] (\x,\y) circle (1.8pt);
    }
  }

  \coordinate (Wold) at (3.35,3.55);
  \coordinate (Wnew) at (6.15,4.95);
  \coordinate (Pold) at (3.0,3.2);
  \coordinate (Pstale) at (7.0,5.2);
  \coordinate (Pnew) at (6.0,4.8);

  \fill[red!65!white!85!black] (Wold) circle (2.4pt);
  \node[red!65!white!85!black, anchor=south east]
    at ($(Wold)+(-0.05,0.10)$) {$\mathbf{\widetilde{W}}^{(t-k)}$};

  \fill[red!65!white!85!black] (Wnew) circle (2.4pt);
  \node[red!65!white!85!black, anchor=south west]
    at ($(Wnew)+(-0.35,0.22)$) {$\mathbf{\widetilde{W}}^{(t)}$};

  \draw[->, red!65!white!85!black, line width=0.9pt]
    (Wold) to[bend left=8] (Wnew);

  \draw[fill=gray!58, draw=gray!75!black, line width=0.6pt]
    ($(Pold)+(-0.11,-0.11)$) rectangle ($(Pold)+(0.11,0.11)$);
  \draw[fill=gray!58, draw=gray!75!black, line width=0.6pt]
    ($(Pstale)+(-0.11,-0.11)$) rectangle ($(Pstale)+(0.11,0.11)$);
  \draw[fill=blue!42!cyan, draw=blue!65!black, line width=0.6pt]
    ($(Pnew)+(-0.11,-0.11)$) rectangle ($(Pnew)+(0.11,0.11)$);

  \draw[dashed, gray!68, line width=0.7pt] (Wold) -- (Pold);
  \draw[dashed, gray!68, line width=0.7pt] (Wnew) -- (Pstale);
  \draw[dashed, blue!70!black, line width=0.7pt] (Wnew) -- (Pnew);

  \node[gray!70!black, anchor=north east]
    at ($(Pold)+(0.80,-0.08)$) {$\operatorname{Proj}_{\mathcal{Q}(\theta)}(\mathbf{\widetilde{W}}^{(t-k)})$};
  \node[gray!70!black, anchor=west]
    at ($(Pstale)+(0.15,0.02)$) {$\operatorname{Proj}_{\mathcal{Q}(\theta)}(\mathbf{\widetilde{W}}^{(t)})$};
  \node[blue!65!black, anchor=north east]
    at ($(Pnew)+(2.35,-0.12)$) {$\operatorname{Proj}_{\mathcal{Q}(\theta^\prime)}(\mathbf{\widetilde{W}}^{(t)})$};

  \fill[gray!55] (0.9,8.1) circle (1.8pt);
  \node[gray!75!black, anchor=west] at (1.15,8.1) {old grid};
  \fill[blue!55!cyan] (3.2,8.1) circle (1.8pt);
  \node[blue!65!black, anchor=west] at (3.45,8.1) {refreshed grid};
  \fill[red!65!white!85!black] (6.5,8.1) circle (2.1pt);
  \node[red!65!white!85!black, anchor=west] at (6.8,8.1) {iterate};

\end{tikzpicture}
\caption{Grid refresh updates the projection to match the current continuous iterate.}
\label{fig:grid-refresh}
\end{wrapfigure}
We therefore equip \ourmethod with the ability to refit the grid parameters from the current continuous weights and re-project, while adjusting the dual variable to keep the ADMM state consistent. We visualize this effect in \Cref{fig:grid-refresh}: the gray projection continues with the old grid after the iterate has moved, while the blue projection shows the refreshed grid better aligned with the current continuous point. This is a key advantage of the \ourmethod algorithm compared to GPTQ, which is required to keep the grid fixed during optimization to avoid compromising prior greedy decisions. We give further details on the grid refresh in Appendix~\ref{app:grid-refresh}.

\vspace{-.5em}
\subsection{Integration with popular quantization transformations}\label{sec:integration}
\vspace{-0.5em}
\ourmethod modifies only the layer-wise weight quantization step, so it can be inserted into PTQ pipelines. For instance, one can apply equivalent transformations that preserve the layer input-output map, but change the coordinates in which the local reconstruction problem is posed e.g. scaling or rotation; or other techniques such as weight clipping. 

\begin{itemize}[leftmargin=*,itemsep=1pt,topsep=2pt]
\item \textbf{Scaling} (SmoothQuant~\citep{xiao2023smoothquant}, AWQ~\citep{lin2024awq}): Per-channel scaling can be absorbed into \ourmethod's scaling matrix \(\bm{\Sigma}\), giving the same layer-wise problem with a modified scaling parameterization.

\item \textbf{Rotation} (SpinQuant~\citep{liu2024spinquant}, QuaRot~\citep{ashkboos2024quarot}): If rotation matrices are first applied, then \ourmethod is applied to the corresponding transformed layer-wise reconstruction problem, i.e., the same local quantization objective expressed in the rotated coordinates with the associated Hessian.

\item \textbf{Clipping Heuristics.} The scale and zero-point in \eqref{eq:projection} can be set without clipping as follows: $\Delta = (\max - \min)/(2^b - 1)$. Heuristics for learning quantization grid clipping can also be integrated within our framework. In this paper, we employ the default method used in GPTQ and SpinQuant: MSE grid search for clipping ratios using 100 grid points with maximum shrink ratio 0.8 (see Appendix~\ref{app:appendix-exp-details}).
\end{itemize}

Appendix~\ref{app:equiv-transform} gives the form of the modified reconstruction problem for scaling and rotation. 


\begin{algorithm}[t]
\caption{\ourmethod: ADMM-based Weight Quantization}
\label{alg:admm}
\begin{algorithmic}[1]
\Require Pre-trained weights $\widehat{\mathbf{W}}$, Hessian $\mathbf{H} = \mathbf{X}^\top\mathbf{X} + \lambda\mathbf{I}$, bit-width $b$, iterations $T$, initial penalty $\rho_0$, growth factor $\gamma > 1$
\Ensure Quantized weights $\mathbf{W}_q$
\State $\bm{\Sigma} \gets \operatorname{Diag}(\mathbf{H})^{-1/2}$ \Comment{Scaling matrix}
\State $\widehat{\mathbf{W}} \gets \bm{\Sigma}^{-1}\widehat{\mathbf{W}}$, \quad $\mathbf{H} \gets \bm{\Sigma}^\top \mathbf{H}\, \bm{\Sigma}$ \Comment{Scale problem}
\State $\mathbf{U}\Lambda\mathbf{U}^\top \gets \text{eig}(\mathbf{H})$ \Comment{Eigendecomposition}
\State $\mathbf{D}^{(0)} \gets \widehat{\mathbf{W}}$, \quad $\mathbf{V}^{(0)} \gets \mathbf{0}$
\For{$t = 0, 1, \ldots, T-1$}
    \State $\mathbf{W}^{(t+1)} \gets \mathbf{U}(\Lambda + \rho_t \mathbf{I})^{-1}\mathbf{U}^\top(\mathbf{H}\widehat{\mathbf{W}} + \rho_t \mathbf{D}^{(t)} - \mathbf{V}^{(t)})$ \Comment{W-update}
    \State $\mathbf{D}^{(t+1)} \gets \bm{\Sigma}^{-1}\operatorname{Proj}_\mathcal{Q}\!\left(\bm{\Sigma}(\mathbf{W}^{(t+1)} + \mathbf{V}^{(t)}/\rho_t)\right)$ \Comment{Quantize}
    \State $\mathbf{V}^{(t+1)} \gets \mathbf{V}^{(t)} + \rho_t(\mathbf{W}^{(t+1)} - \mathbf{D}^{(t+1)})$ \Comment{Dual update}
    \State $\rho_{t+1} \gets \texttt{update}(\rho_t)$
\EndFor
\State $\mathbf{W}_q \gets \bm{\Sigma}\mathbf{D}^{(T)}$
\State $\mathbf{W}_q \gets \Call{PairSwap}{\mathbf{W}_q, \widehat{\mathbf{W}}, \mathbf{H}}$ \Comment{Local search (\cref{sec:local-search})}
\State \Return $\mathbf{W}_q$
\end{algorithmic}
\end{algorithm}
\vspace{-0.75em}
\subsection{Local search refinement}\label{sec:local-search}
\vspace{-0.5em}

Since ADMM updates all weights jointly, the $\mathbf{W}$-update may shift many weights in the same direction. Weights near the midpoint of two adjacent grid points are particularly susceptible. This correlated rounding might lead to suboptimal discrete solutions. To reconcile this issue, we propose a highly-efficient \emph{pair-swap} local search as a post-processing step. Once \ourmethod has converged, the solution $\mathbf{W}_q$ may be improved by discrete local moves that directly reduce the reconstruction error~\eqref{eq:hessian-form}.
\vspace{-0.75em}
\paragraph{Setup.}
Recall $\mathbf{W}_q \in \mathbb{R}^{n \times p}$ (defined in \cref{alg:admm}).
Let $\Delta_j$ be the per-output-channel scale for column~$j$. Each entry $[\mathbf{W}_q]_{ij}$ takes values on the grid $\{z_j+k\,\Delta_j : k \in \mathbb{Z},\, 0\le k\le 2^b-1\}$. A \emph{pair-swap} selects two rows $i_1, i_2$ and, for each column~$j$ independently, shifts $[\mathbf{W}_q]_{i_1 j}$ by $\pm\Delta_j$ and $[\mathbf{W}_q]_{i_2 j}$ by $\pm\Delta_j$, subject to staying within the grid bounds.
\vspace{-0.5em}
\paragraph{Closed-form cost.}
The reconstruction error~\eqref{eq:hessian-form} decomposes as a sum over output columns: $\mathcal{L} = \sum_{j=1}^{p} \boldsymbol{\epsilon}_j^\top \mathbf{H}\, \boldsymbol{\epsilon}_j$, where $\boldsymbol{\epsilon}_j = (\mathbf{W}_q - \widehat{\mathbf{W}})_{:,j} \in \mathbb{R}^{n}$. Perturbing rows $i_1, i_2$ in column~$j$ by $\delta_{i_1 j}, \delta_{i_2 j} \in \{+\Delta_j, -\Delta_j\}$ changes the loss contribution of column~$j$ by
\begin{equation}\label{eq:delta-loss}
\Delta\mathcal{L}_{i_1, i_2, j}
= -2\,\delta_{i_1 j}\, G_{i_1 j}
+ \delta_{i_1 j}^2\, H_{i_1 i_1}
- 2\,\delta_{i_2 j}\, G_{i_2 j}
+ \delta_{i_2 j}^2\, H_{i_2 i_2}
+ 2\,\delta_{i_1 j}\delta_{i_2 j}\, H_{i_1 i_2}\,,
\end{equation}
where $\mathbf{G} = \mathbf{H}(\widehat{\mathbf{W}} - \mathbf{W}_q)$ is the gradient of the objective. Only three Hessian entries and two gradient entries are needed are needed to evaluate swapping a pair with column $j$ and rows $i_1$ and $i_2$. Evaluating all potential swaps is made extremely efficient through GPU parallelization in our implementation.
\vspace{-1em}
\paragraph{Algorithm.}
For a given row pair $(i_1, i_2)$, each output column~$j$  selects the best sign combination among $\{(\pm\Delta_j, \pm\Delta_j)\}$ via~\eqref{eq:delta-loss}. First, we compute $\mathbf{G}$ (one matrix-multiplication cost). Then each round proceeds as follows: (i)~sample random row pairs in batches; (ii)~evaluate~\eqref{eq:delta-loss} for all four sign combinations per column; (iv)~apply the pair with the largest total improvement; (v)~update $\mathbf{G}$ via a rank-2 correction: $\mathbf{G} \gets \mathbf{G} - \mathbf{H}_{:,\{i_1, i_2\}}\,[\boldsymbol{\delta}_{i_1,:};\, \boldsymbol{\delta}_{i_2,:}]$, where $\boldsymbol{\delta}_{i,:} \in \mathbb{R}^{1 \times p}$ is the applied perturbation for row~$i$. We run at most 5 rounds. Our ablation studies (Table~\ref{tab:admmq-ablation-spinquant-w3w4}) show that local search is especially impactful on LLaMA models within SpinQuant at W4A4KV4, where removing it increases WikiText-2 perplexity from 7.26 to 7.97 on LLaMA-3 8B and from 15.55 to 19.45 on LLaMA-3.2 1B-Instruct.

\vspace{-0.75em}
\subsection{Convergence}\label{sec:convergence}
\vspace{-0.5em}
\begin{thm}\label{thm:admmq}
Let $\{\mathbf{D}^{(t)}\}_{t=0}^\infty$ and $\{\mathbf{W}^{(t)}\}_{t=0}^\infty$ be the sequences generated by \cref{alg:admm}. Suppose the penalty parameters $\{\rho_t\}_{t=1}^\infty$ satisfy $\sum_{t=1}^\infty 1/\rho_t < \infty$. Then for any $t \ge 1$:
\begin{equation}
\max\left\{\|\mathbf{D}^{(t+1)} - \mathbf{D}^{(t)}\|_F,\; \|\mathbf{W}^{(t+1)} - \mathbf{D}^{(t+1)}\|_F\right\} \le \frac{C}{\rho_t}
\end{equation}
where $C$ is a constant depending on $\mathbf{X}$, $\widehat{\mathbf{W}}$, $\lambda$, and $\sum_{t=1}^\infty 1/\rho_t$. In particular, there exists a quantized matrix $\bar{\mathbf{D}}$ such that $\mathbf{D}^{(t)} \to \bar{\mathbf{D}}$ and $\mathbf{W}^{(t)} \to \bar{\mathbf{D}}$ as $t \to \infty$.
\end{thm}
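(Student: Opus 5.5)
The plan follows the ALPS argument of \cite{meng2024alps}. The only property of the constraint set that argument uses is that the projection in the $\mathbf{D}$-update returns a point of a fixed set whose distance to the projected target can be bounded. For $\mathcal{Q}$ this holds because $\bm{\Sigma}\mathbf{D}\in\mathcal{Q}$ and the grid is bounded: $\bm{\Sigma}\mathcal{Q}$ is a compact (indeed finite) set for fixed grid parameters. Throughout I assume the grid is fixed or refreshed only finitely often, which is what makes $\mathcal{Q}$ a fixed finite set in the tail.

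\textbf{Step 1: closed-form recursion.} First I would eliminate $\mathbf{W}$ using the $\mathbf{W}$-update \eqref{eq:w-update}. Write $\mathbf{W}^{(t+1)} = (\mathbf{H}+\rho_t\mathbf{I})^{-1}(\mathbf{H}\widehat{\mathbf{W}} + \rho_t\mathbf{D}^{(t)} - \mathbf{V}^{(t)})$. The first-order condition gives the identity
\[
\mathbf{V}^{(t+1)} = \mathbf{H}(\widehat{\mathbf{W}}-\mathbf{W}^{(t+1)}) + \rho_t(\mathbf{D}^{(t)}-\mathbf{D}^{(t+1)}).
\]
This follows from the optimality condition $\mathbf{H}(\mathbf{W}^{(t+1)}-\widehat{\mathbf{W}})+\mathbf{V}^{(t)}+\rho_t(\mathbf{W}^{(t+1)}-\mathbf{D}^{(t)})=0$ combined with the dual update \eqref{eq:v-update}. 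The identity expresses the dual variable through quantities that I will show are bounded.

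\textbf{Step 2: uniform boundedness.} Since $\mathbf{D}^{(t)}$ ranges over the bounded set $\bm{\Sigma}^{-1}\mathcal{Q}$, there is $B$ with $\|\mathbf{D}^{(t)}\|_F\le B$ for all $t$. I would then bound $\|\mathbf{V}^{(t)}\|_F/\rho_t$ inductively. Set $\widetilde{\mathbf{W}}^{(t)}=\mathbf{W}^{(t+1)}+\mathbf{V}^{(t)}/\rho_t$. The projection is onto a set containing points within distance $\mathrm{diam}$ of anything in the grid box, so $\|\mathbf{W}^{(t+1)}-\mathbf{D}^{(t+1)}\|_F \le \|\widetilde{\mathbf{W}}^{(t)} - \mathbf{D}^{(t+1)}\|_F + \|\mathbf{V}^{(t)}\|_F/\rho_t$. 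Using the resolvent bound $\|(\mathbf{H}+\rho\mathbf{I})^{-1}\|\le 1/\rho$ and $\|\mathbf{H}\|\le \|\mathbf{X}\|^2+\lambda$ (after scaling), I expect a recursion of the form
\[
\|\mathbf{V}^{(t+1)}\|_F \le \Big(1+\tfrac{c_1}{\rho_t}\Big)\|\mathbf{V}^{(t)}\|_F + c_2,
\]
or, more conveniently, a recursion for $a_t=\|\mathbf{V}^{(t)}\|_F$ showing that $a_t$ stays bounded up to the factor $\prod_t(1+c_1/\rho_t)\le \exp(c_1\sum_t 1/\rho_t)$. This is exactly where the summability assumption $\sum_t 1/\rho_t<\infty$ enters, and it is the step I expect to be the main obstacle. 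The difficulty is that the projection onto a nonconvex discrete set is not nonexpansive, so the only available control is the crude bound from compactness. The dual recursion must therefore be arranged so that the growth factor is $1+O(1/\rho_t)$ rather than a constant. My first attempt would bound $\|\mathbf{V}^{(t+1)}\|_F$ via $\|\rho_t(\mathbf{W}^{(t+1)}-\mathbf{D}^{(t+1)})\|$, following ALPS verbatim. This should give $\|\mathbf{V}^{(t)}\|_F\le C'$ uniformly, where $C'$ depends on $\|\mathbf{X}\|,\lambda,\|\widehat{\mathbf{W}}\|_F,B$, and $\sum_t1/\rho_t$. Then $\|\mathbf{W}^{(t+1)}\|_F$ is bounded as well, since it is a resolvent applied to bounded terms.

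\textbf{Step 3: the rates and the limit.} With bounded $\mathbf{V}$ and $\mathbf{W}$, the dual update gives
\[
\|\mathbf{W}^{(t+1)}-\mathbf{D}^{(t+1)}\|_F=\|\mathbf{V}^{(t+1)}-\mathbf{V}^{(t)}\|_F/\rho_t\le 2C'/\rho_t.
\]
Rearranging the identity from Step 1 gives
\[
\rho_t\|\mathbf{D}^{(t)}-\mathbf{D}^{(t+1)}\|_F\le \|\mathbf{V}^{(t+1)}\|_F+\|\mathbf{H}\|\,\|\widehat{\mathbf{W}}-\mathbf{W}^{(t+1)}\|_F\le C.
\]
These two bounds together are the claimed inequality. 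Finally, summability of $1/\rho_t$ makes $\sum_t\|\mathbf{D}^{(t+1)}-\mathbf{D}^{(t)}\|_F<\infty$, so $\{\mathbf{D}^{(t)}\}$ is Cauchy and converges to some $\bar{\mathbf{D}}$. Because $\mathbf{D}^{(t)}$ takes values in a finite (closed) set, the sequence is in fact eventually constant, and $\bar{\mathbf{D}}$ is quantized. Then $\|\mathbf{W}^{(t)}-\bar{\mathbf{D}}\|_F\le \|\mathbf{W}^{(t)}-\mathbf{D}^{(t)}\|_F+\|\mathbf{D}^{(t)}-\bar{\mathbf{D}}\|_F\to0$.
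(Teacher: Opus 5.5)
\textbf{Gap in Step 2.} The proof needs a uniform bound on $\|\mathbf{V}^{(t)}\|_F$, and compactness of the grid cannot supply it. Compactness holds equally for a $\mathbf{D}$-update that returns an \emph{arbitrary} grid point (say, alternating between two). For such an update $\|\mathbf{D}^{(t+1)}-\mathbf{D}^{(t)}\|_F$ does not tend to $0$, so compactness cannot be what drives the result.

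Quantitatively, write $\widetilde{\mathbf{W}}^{(t)}=\mathbf{W}^{(t+1)}+\mathbf{V}^{(t)}/\rho_t$. Even using that $\mathbf{D}^{(t+1)}$ is the nearest grid point, any bound on $\|\widetilde{\mathbf{W}}^{(t)}-\mathbf{D}^{(t+1)}\|_F$ that refers only to the grid is a rounding error of the order of the grid spacing, not $O(1/\rho_t)$. Since $\mathbf{V}^{(t+1)}=\rho_t(\widetilde{\mathbf{W}}^{(t)}-\mathbf{D}^{(t+1)})$, you only get $\|\mathbf{V}^{(t+1)}\|_F=O(\rho_t)$, which destroys the $C/\rho_t$ rate. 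None of your fallbacks repairs this:
\begin{itemize}
\item The Step 1 identity cannot help, because its term $\rho_t\|\mathbf{D}^{(t)}-\mathbf{D}^{(t+1)}\|_F$ is exactly the unknown.
\item Your ``first attempt'' $\|\mathbf{V}^{(t+1)}\|_F\le\|\mathbf{V}^{(t)}\|_F+\rho_t\|\mathbf{W}^{(t+1)}-\mathbf{D}^{(t+1)}\|_F$ is circular.
\item A recursion $a_{t+1}\le(1+c_1/\rho_t)a_t+c_2$ would not give a uniform bound anyway, since the additive $c_2$ accumulates linearly in $t$.
\end{itemize}

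\textbf{The missing idea.} This is precisely what the paper proves before invoking ALPS: a comparison with the previous iterate. For $t\ge1$ we have $\bm{\Sigma}\mathbf{D}^{(t)}\in\mathcal{Q}$, and $\mathbf{D}^{(t+1)}$ is the nearest feasible point to $\widetilde{\mathbf{W}}^{(t)}$, so
\[
\|\mathbf{D}^{(t+1)}-\widetilde{\mathbf{W}}^{(t)}\|_F\le\|\mathbf{D}^{(t)}-\widetilde{\mathbf{W}}^{(t)}\|_F .
\]
This fails at $t=0$, since $\mathbf{D}^{(0)}=\widehat{\mathbf{W}}$ is not quantized, which is why the theorem starts at $t\ge1$. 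The $\mathbf{W}$-optimality condition reads $\rho_t(\widetilde{\mathbf{W}}^{(t)}-\mathbf{D}^{(t)})=\mathbf{H}(\widehat{\mathbf{W}}-\mathbf{W}^{(t+1)})$. Together these give the $\rho$-free bound $\|\mathbf{V}^{(t+1)}\|_F\le\|\mathbf{H}\|\,\|\widehat{\mathbf{W}}-\mathbf{W}^{(t+1)}\|_F$.

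Now combine this with $\|\mathbf{W}^{(t+1)}-\widehat{\mathbf{W}}\|_F\le\|\mathbf{D}^{(t)}-\widehat{\mathbf{W}}\|_F+\|\mathbf{V}^{(t)}\|_F/\rho_t$. To control $\|\mathbf{D}^{(t)}-\widehat{\mathbf{W}}\|_F$ you can use either your bound $\|\mathbf{D}^{(t)}\|_F\le B$, or, as in ALPS, $\|\mathbf{D}^{(t)}-\mathbf{W}^{(t)}\|_F\le(\|\mathbf{V}^{(t)}\|_F+\|\mathbf{V}^{(t-1)}\|_F)/\rho_{t-1}$. Either way you get a recursion with multiplicative factors $1+O(1/\rho_t)$, hence uniform bounds via $\exp(c\sum_t 1/\rho_t)$. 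After that, your Step 3 goes through as written: the two $C/\rho_t$ bounds, the Cauchy argument, and eventual constancy on the finite grid.

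\textbf{Grid refresh.} The same inequality is why refresh needs the acceptance test \eqref{eq:refreshcri}. At a refresh iteration $\mathbf{D}^{(t)}$ lies on the old grid, so it is no longer a competitor in the new projection. The comparison survives only because the accepted $\mathbf{D}^{(t+1)}_{\mathrm{new}}$ is no farther from $\widetilde{\mathbf{W}}^{(t)}$ than $\mathbf{D}^{(t+1)}_{\mathrm{old}}$, which is no farther than $\mathbf{D}^{(t)}$. Your assumption that the grid is ``fixed in the tail'' skips this step. It would yield the bound only after the last refresh, with a constant depending on the state at that point rather than only on $\mathbf{X}$, $\widehat{\mathbf{W}}$, $\lambda$, and $\sum_t 1/\rho_t$.
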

\vspace{-0.7em}
The condition $\sum 1/\rho_t < \infty$ is satisfied by, e.g., $\rho_t = \rho_0 \cdot \gamma^t$ with $\gamma > 1$, which is the schedule used in \cref{alg:admm}. This guarantees that the continuous and discrete iterates converge to the same limit point, ensuring the constraint $\mathbf{W} = \mathbf{D}$ is asymptotically satisfied. Theorem \ref{thm:admmq} extends the convergence guarantee of ADMM in the sparsity setting \cite{meng2024alps} to quantization, including under grid refresh. The proof is deferred to Appendix \ref{app:proofadmm}.

\vspace{-0.7em}
\section{Experimental results}
\label{sec:experimental-results}
\vspace{-0.5em}
\subsection{Experimental setup} \label{sect:exp-setup}
\vspace{-0.5em}
\noindent\textbf{Models and evaluation protocol.}~~~
We evaluate \ourmethod across three settings: (i)~weight-only quantization on the Qwen3 model family (1.7B, 4B, 8B, 14B, 32B); (ii)~weight-and-activation quantization with SpinQuant~\citep{liu2024spinquant} on LLaMA-3 8B~\citep{dubey2024llama}, LLaMA-3.2 1B-Instruct, and Qwen3-8B; and (iii)~weight-and-activation quantization with SmoothQuant~\citep{xiao2023smoothquant} on Qwen3-8B.
We report perplexity ($\downarrow$) on WikiText-2, C4, and Penn Treebank (PTB), and zero-shot accuracy ($\uparrow$) averaged over nine tasks. For zero-shot evaluation, we use the LM Evaluation Harness~\citep{gao10256836framework} on PIQA~\citep{bisk2020piqa}, ARC-Easy/Challenge~\citep{clark2018think}, HellaSwag~\citep{zellers2019hellaswag}, Winogrande~\citep{sakaguchi2021winogrande}, RTE~\citep{wang2018glue}, OpenBookQA~\citep{mihaylov2018openbookqa}, BoolQ~\citep{clark2019boolq}, and Social IQa~\citep{sap2019socialiqacommonsensereasoningsocial}. 

\vspace{-0.8em}
\subsection{Numerical results}
\vspace{-0.8em}

\paragraph{Weight-only quantization on the Qwen3 model family.}
Table~\ref{tab:qwen3-architecture-sweep} compares GPTQ, AWQ, \crnew{LDLQ,} and \ourmethod under weight-only W4/3/2 per-channel quantization across five Qwen3 model sizes (1.7B--32B). \ourmethod consistently improves over GPTQ on WikiText-2, C4, and PTB perplexity across bit-widths, with larger gains at W3 and W2 where the quantization problem is more challenging. For example, on Qwen3-8B at W3, \ourmethod reduces WikiText-2 perplexity from 12.85 to 10.06 and PTB perplexity from 35.01 to 27.30. Zero-shot accuracy follows a similar trend. AWQ is strong at W4 but less competitive at W3 and W2, perhaps because it uses RTN as its weight quantizer. \crnew{Relative to LDLQ, \ourmethod matches or improves at W4 and substantially improves at W3 and W2.} To understand the source of improvements, Figure~\ref{fig:layer-recon-error} compares the layer-wise reconstruction error of \ourmethod relative to GPTQ on Qwen3-8B under W4 and W3 per-channel quantization. \ourmethod achieves lower reconstruction error across all 36 decoder layers, with largest relative gains in early layers, where \ourmethod reduces reconstruction error to as low as 10\% of GPTQ's. In later layers, \ourmethod consistently retains roughly 70--75\% of GPTQ's error, both at W3 and W4. In Appendix~\ref{app:appendix-exp-details}, Table~\ref{tab:qwen3_runtime_memory} we report wall-clock quantization time and peak GPU memory usage for GPTQ and \ourmethod across all Qwen3 models. The runtime grows with model size, but remains practical: even at 32B, quantization completes in under two hours on a single GPU. We also measure peak GPU memory during quantization; GPTQ is 6\% less memory-intensive. Since weight quantization is a one-time step, we view the modest increase in runtime and memory as acceptable tradeoffs for the consistent gains in perplexity/zero-shot accuracy.

\vspace{-3mm}
\paragraph{Weight-and-activation quantization with SpinQuant and SmoothQuant.}
We evaluate \ourmethod as a drop-in replacement for GPTQ within the SpinQuant~\citep{liu2024spinquant} pipeline on Qwen3-8B, LLaMA-3 8B, and LLaMA-3.2 1B-Instruct and the SmoothQuant \cite{xiao2023smoothquant} pipeline on Qwen3-8B. 
SpinQuant applies learned rotation matrices ($R_1$, $R_2$) to the model weights before quantization, and optionally uses online Hadamard transforms ($R_3$, $R_4$) for activation and KV cache quantization. We compare GPTQ and \ourmethod across three weight bit-widths (2, 3, 4) with 4-bit activation and KV cache quantization (W$b$A4KV4), using four weight quantizer settings: symmetric/asymmetric $\times$ per-channel/groupsize-128. For GPTQ and ADMMQ we use the default configuration for rotation optimization W16A4KV4 rotation optimization for (W$b$A4KV4) quantization. For RTN, we use a single rotation optimized at W4A4KV4. Results are reported in \cref{tab:sq-llama-results,tab:qwen3-results}.

\vspace{-2mm}
\noindent\scalebox{0.96}{\begin{minipage}{1.042\textwidth}
\centering
\scriptsize
\setlength{\tabcolsep}{3.0pt}
\renewcommand{\arraystretch}{0.88}
\setlength{\abovecaptionskip}{3pt}
\setlength{\belowcaptionskip}{0pt}
\resizebox{\textwidth}{!}{%
\begin{tabular}{l|c|cccc|c|cccc|c|cccc|c|cccc}
\toprule
& \multicolumn{5}{c|}{\textbf{Wiki-2 PPL} ($\downarrow$)}
& \multicolumn{5}{c|}{\textbf{C4 PPL} ($\downarrow$)}
& \multicolumn{5}{c|}{\textbf{PTB PPL} ($\downarrow$)}
& \multicolumn{5}{c}{\textbf{0-shot Acc.} ($\uparrow$)} \\
\cmidrule(lr){2-6} \cmidrule(lr){7-11} \cmidrule(lr){12-16} \cmidrule(lr){17-21}
\textbf{Model}
& \cellcolor{gray!10}\textbf{BF16} & \texttt{GPTQ} & \texttt{AWQ} & \crnew{\texttt{LDLQ}} & \cellcolor{darkgreen!10}\ourmethod
& \cellcolor{gray!10}\textbf{BF16} & \texttt{GPTQ} & \texttt{AWQ} & \crnew{\texttt{LDLQ}} & \cellcolor{darkgreen!10}\ourmethod
& \cellcolor{gray!10}\textbf{BF16} & \texttt{GPTQ} & \texttt{AWQ} & \crnew{\texttt{LDLQ}} & \cellcolor{darkgreen!10}\ourmethod
& \cellcolor{gray!10}\textbf{BF16} & \texttt{GPTQ} & \texttt{AWQ} & \crnew{\texttt{LDLQ}} & \cellcolor{darkgreen!10}\ourmethod \\
\midrule
\multicolumn{21}{c}{\cellcolor{blue!8}\textbf{W4}} \\
\midrule
1.7B-Base & \cellcolor{gray!10}9.40 & 12.40 & \textbf{11.08} & \crnew{12.54} & \cellcolor{darkgreen!10}11.24
          & \cellcolor{gray!10}15.42 & 20.88 & \textbf{17.53} & \crnew{21.03} & \cellcolor{darkgreen!10}18.94
          & \cellcolor{gray!10}25.37 & 37.58 & \textbf{30.10} & \crnew{38.18} & \cellcolor{darkgreen!10}32.51
          & \cellcolor{gray!10}62.54 & 56.58 & \textbf{60.19} & \crnew{56.64} & \cellcolor{darkgreen!10}58.34 \\
4B-Base   & \cellcolor{gray!10}7.90 & 8.58  & 9.20  & \crnew{8.58} & \cellcolor{darkgreen!10}\textbf{8.55}
          & \cellcolor{gray!10}13.31 & 14.60 & 14.87 & \crnew{14.59} & \cellcolor{darkgreen!10}\textbf{14.57}
          & \cellcolor{gray!10}19.68 & \textbf{22.08} & 22.67 & \crnew{22.09} & \cellcolor{darkgreen!10}22.12
          & \cellcolor{gray!10}66.72 & 64.96 & 65.03 & \crnew{64.86} & \cellcolor{darkgreen!10}\textbf{65.63} \\
8B-Base   & \cellcolor{gray!10}7.00 & 7.82  & 8.06  & \crnew{7.83} & \cellcolor{darkgreen!10}\textbf{7.56}
          & \cellcolor{gray!10}11.91 & 13.39 & 13.14 & \crnew{13.37} & \cellcolor{darkgreen!10}\textbf{13.01}
          & \cellcolor{gray!10}17.12 & 19.82 & 19.26 & \crnew{19.92} & \cellcolor{darkgreen!10}\textbf{19.04}
          & \cellcolor{gray!10}69.07 & 68.93 & 67.95 & \crnew{\textbf{69.35}} & \cellcolor{darkgreen!10}68.14 \\
14B-Base  & \cellcolor{gray!10}6.38 & 6.87  & 7.28  & \crnew{6.89} & \cellcolor{darkgreen!10}\textbf{6.79}
          & \cellcolor{gray!10}10.99 & 11.91 & 12.05 & \crnew{11.92} & \cellcolor{darkgreen!10}\textbf{11.83}
          & \cellcolor{gray!10}15.30 & 16.81 & 16.99 & \crnew{16.81} & \cellcolor{darkgreen!10}\textbf{16.54}
          & \cellcolor{gray!10}71.75 & 70.76 & 69.82 & \crnew{71.06} & \cellcolor{darkgreen!10}\textbf{71.07} \\
32B       & \cellcolor{gray!10}7.61 & 8.22  & 8.44  & \crnew{8.18} & \cellcolor{darkgreen!10}\textbf{8.06}
          & \cellcolor{gray!10}12.45 & 13.58 & 13.54 & \crnew{\textbf{13.51}} & \cellcolor{darkgreen!10}13.55
          & \cellcolor{gray!10}18.82 & 20.84 & 21.18 & \crnew{20.90} & \cellcolor{darkgreen!10}\textbf{20.79}
          & \cellcolor{gray!10}71.49 & \textbf{69.98} & 69.68 & \crnew{69.27} & \cellcolor{darkgreen!10}69.14 \\
\midrule[\heavyrulewidth]
\multicolumn{21}{c}{\cellcolor{blue!8}\textbf{W3}} \\
\midrule
1.7B-Base & \cellcolor{gray!10}9.40 & 24.48 & 34.55 & \crnew{24.10} & \cellcolor{darkgreen!10}\textbf{16.39}
          & \cellcolor{gray!10}15.42 & 50.55 & 49.96 & \crnew{48.68} & \cellcolor{darkgreen!10}\textbf{37.96}
          & \cellcolor{gray!10}25.37 & 117.14 & 159.65 & \crnew{127.48} & \cellcolor{darkgreen!10}\textbf{52.95}
          & \cellcolor{gray!10}62.54 & 43.65 & 46.06 & \crnew{43.42} & \cellcolor{darkgreen!10}\textbf{46.78} \\
4B-Base   & \cellcolor{gray!10}7.90 & 13.64 & 21.30 & \crnew{13.50} & \cellcolor{darkgreen!10}\textbf{12.85}
          & \cellcolor{gray!10}13.31 & 24.84 & 28.29 & \crnew{24.56} & \cellcolor{darkgreen!10}\textbf{24.49}
          & \cellcolor{gray!10}19.68 & 41.42 & 60.86 & \crnew{41.59} & \cellcolor{darkgreen!10}\textbf{38.49}
          & \cellcolor{gray!10}66.72 & 50.89 & 49.74 & \crnew{51.50} & \cellcolor{darkgreen!10}\textbf{52.40} \\
8B-Base   & \cellcolor{gray!10}7.00 & 12.85 & 17.28 & \crnew{12.77} & \cellcolor{darkgreen!10}\textbf{10.06}
          & \cellcolor{gray!10}11.91 & 21.27 & 22.93 & \crnew{21.30} & \cellcolor{darkgreen!10}\textbf{18.25}
          & \cellcolor{gray!10}17.12 & 35.01 & 48.55 & \crnew{34.87} & \cellcolor{darkgreen!10}\textbf{27.30}
          & \cellcolor{gray!10}69.07 & 54.43 & 51.08 & \crnew{54.35} & \cellcolor{darkgreen!10}\textbf{57.80} \\
14B-Base  & \cellcolor{gray!10}6.38 & 9.50  & 14.34 & \crnew{9.47} & \cellcolor{darkgreen!10}\textbf{8.16}
          & \cellcolor{gray!10}10.99 & 18.03 & 19.50 & \crnew{17.84} & \cellcolor{darkgreen!10}\textbf{15.35}
          & \cellcolor{gray!10}15.30 & 30.12 & 39.31 & \crnew{30.05} & \cellcolor{darkgreen!10}\textbf{22.14}
          & \cellcolor{gray!10}71.75 & 64.73 & 53.45 & \crnew{63.94} & \cellcolor{darkgreen!10}\textbf{66.53} \\
32B       & \cellcolor{gray!10}7.61 & 10.70 & 16.82 & \crnew{10.85} & \cellcolor{darkgreen!10}\textbf{9.82}
          & \cellcolor{gray!10}12.45 & 18.65 & 23.06 & \crnew{19.03} & \cellcolor{darkgreen!10}\textbf{17.76}
          & \cellcolor{gray!10}18.82 & 32.13 & 65.32 & \crnew{33.75} & \cellcolor{darkgreen!10}\textbf{28.70}
          & \cellcolor{gray!10}71.49 & 62.92 & 60.28 & \crnew{63.93} & \cellcolor{darkgreen!10}\textbf{65.75} \\
\midrule[\heavyrulewidth]
\multicolumn{21}{c}{\cellcolor{blue!8}\textbf{W2}} \\
\midrule
1.7B-Base & \cellcolor{gray!10}9.40 & 441.72 & $2.5\text{e}7$ & \crnew{442.02} & \cellcolor{darkgreen!10}\textbf{42.72}
          & \cellcolor{gray!10}15.42 & 1301.16 & $2.7\text{e}7$ & \crnew{1.28e3} & \cellcolor{darkgreen!10}\textbf{252.47}
          & \cellcolor{gray!10}25.37 & 1472.54 & $2.2\text{e}7$ & \crnew{1.63e3} & \cellcolor{darkgreen!10}\textbf{199.86}
          & \cellcolor{gray!10}62.54 & 36.74 & 37.37 & \crnew{36.52} & \cellcolor{darkgreen!10}\textbf{37.62} \\
4B-Base   & \cellcolor{gray!10}7.90 & 27.75 & $1.9\text{e}7$ & \crnew{29.78} & \cellcolor{darkgreen!10}\textbf{24.04}
          & \cellcolor{gray!10}13.31 & 72.68 & $2.1\text{e}7$ & \crnew{72.96} & \cellcolor{darkgreen!10}\textbf{66.07}
          & \cellcolor{gray!10}19.68 & 101.48 & $2.2\text{e}7$ & \crnew{110.28} & \cellcolor{darkgreen!10}\textbf{91.55}
          & \cellcolor{gray!10}66.72 & 42.54 & 36.61 & \crnew{42.64} & \cellcolor{darkgreen!10}\textbf{43.01} \\
8B-Base   & \cellcolor{gray!10}7.00 & 32.88 & $3.4\text{e}7$ & \crnew{35.00} & \cellcolor{darkgreen!10}\textbf{17.69}
          & \cellcolor{gray!10}11.91 & 89.37 & $2.8\text{e}7$ & \crnew{89.64} & \cellcolor{darkgreen!10}\textbf{45.31}
          & \cellcolor{gray!10}17.12 & 145.36 & $3.1\text{e}7$ & \crnew{149.94} & \cellcolor{darkgreen!10}\textbf{61.68}
          & \cellcolor{gray!10}69.07 & 41.40 & 37.86 & \crnew{41.21} & \cellcolor{darkgreen!10}\textbf{47.06} \\
14B-Base  & \cellcolor{gray!10}6.38 & 40.98 & $2.1\text{e}7$ & \crnew{42.31} & \cellcolor{darkgreen!10}\textbf{12.62}
          & \cellcolor{gray!10}10.99 & 125.90 & $1.9\text{e}7$ & \crnew{151.29} & \cellcolor{darkgreen!10}\textbf{36.45}
          & \cellcolor{gray!10}15.30 & 177.38 & $1.9\text{e}7$ & \crnew{182.13} & \cellcolor{darkgreen!10}\textbf{45.22}
          & \cellcolor{gray!10}71.75 & 41.67 & 37.50 & \crnew{41.61} & \cellcolor{darkgreen!10}\textbf{51.58} \\
32B       & \cellcolor{gray!10}7.61 & 35.95 & $5.1\text{e}6$ & \crnew{23.21} & \cellcolor{darkgreen!10}\textbf{18.15}
          & \cellcolor{gray!10}12.45 & 553.18 & $4.4\text{e}6$ & \crnew{4.69e3} & \cellcolor{darkgreen!10}\textbf{50.00}
          & \cellcolor{gray!10}18.82 & 147.86 & $3.9\text{e}6$ & \crnew{113.30} & \cellcolor{darkgreen!10}\textbf{66.01}
          & \cellcolor{gray!10}71.49 & 52.77 & 36.57 & \crnew{51.20} & \cellcolor{darkgreen!10}\textbf{55.65} \\
\bottomrule
\end{tabular}
}
\vspace{-1pt}
\captionof{table}{Performance of Qwen3 under 4-bit (W4), 3-bit (W3), and 2-bit (W2) symmetric per-channel weight-only quantization, comparing GPTQ, AWQ, \crnew{LDLQ,} and \ourmethod. Note: the 32B-Base model is not available so we use \url{https://huggingface.co/Qwen/Qwen3-32B}.}
\label{tab:qwen3-architecture-sweep}
\vspace{3em}
\includegraphics[width=0.9\linewidth]{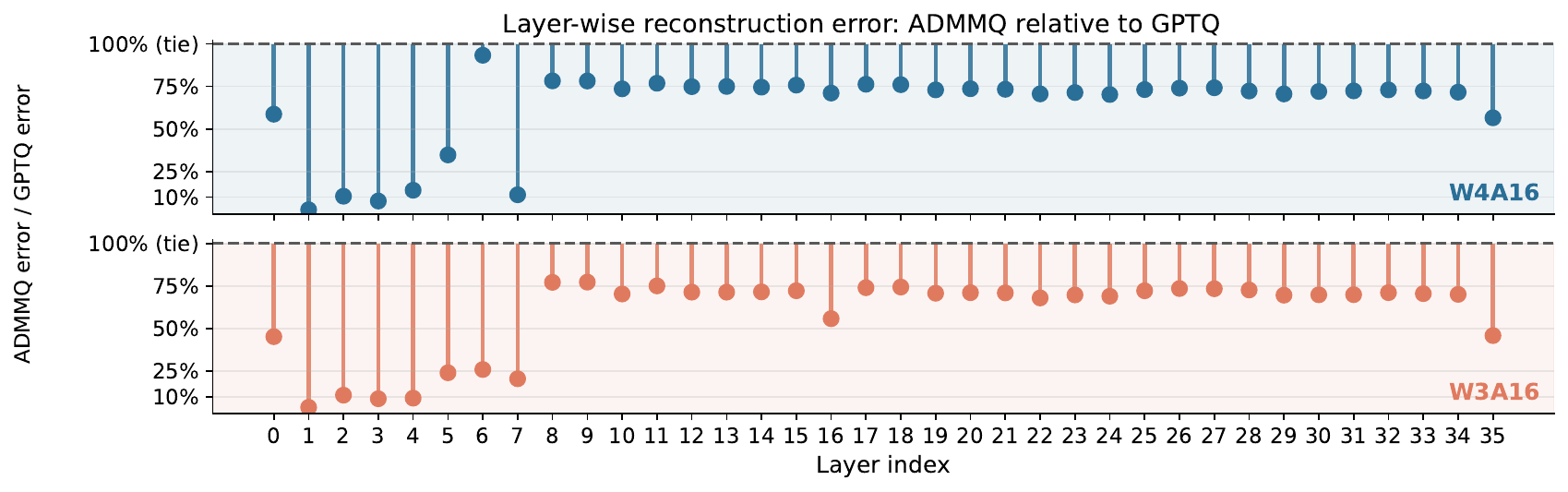}
\vspace{-0.6em}
\captionof{figure}{Layer-wise reconstruction error of \ourmethod relative to GPTQ on Qwen3-8B-Base under W4 (top) and W3 (bottom) per-channel weight-only quantization. Each point shows $\text{\ourmethod error} / \text{GPTQ error}$; values below the 100\% line indicate \ourmethod achieves lower error.}
\label{fig:layer-recon-error}
\end{minipage}}
\vspace{-4em}
\begin{table}[H]
\centering
\scriptsize
\setlength{\abovecaptionskip}{3pt}
\scalebox{0.88}{\begin{minipage}{1.136\columnwidth}
\centering
\resizebox{\columnwidth}{!}{%
\renewcommand{\arraystretch}{0.95}
\newcolumntype{C}[1]{>{\centering\arraybackslash}p{#1}}%
\begin{tabular}
{ll|C{2.8em}C{2.8em}C{2.8em}|C{2.8em}C{2.8em}C{2.8em}|C{2.8em}C{2.8em}C{2.8em}|C{2.8em}C{2.8em}C{2.8em}}
\toprule
& & \multicolumn{3}{c|}{\textbf{Wiki-2 PPL} ($\downarrow$)} & \multicolumn{3}{c|}{\textbf{C4 PPL} ($\downarrow$)} & \multicolumn{3}{c|}{\textbf{PTB PPL} ($\downarrow$)} & \multicolumn{3}{c}{\textbf{0-shot Acc.} ($\uparrow$)} \\
\cmidrule(lr){3-5} \cmidrule(lr){6-8} \cmidrule(lr){9-11} \cmidrule(lr){12-14}
\textbf{Config} & \textbf{Wt.\ Quant} & \texttt{RTN} & \texttt{GPTQ} & \cellcolor{darkgreen!10}\ourmethod & \texttt{RTN} & \texttt{GPTQ} & \cellcolor{darkgreen!10}\ourmethod & \texttt{RTN} & \texttt{GPTQ} & \cellcolor{darkgreen!10}\ourmethod & \texttt{RTN} & \texttt{GPTQ} & \cellcolor{darkgreen!10}\ourmethod \\
%
%
\midrule
\multicolumn{14}{c}{\cellcolor{blue!8}\textbf{LLaMA-3 8B}} \\
\midrule
\rowcolor{gray!10} \multicolumn{2}{l|}{\textbf{BF16 (dense)}} & \multicolumn{3}{c|}{6.14} & \multicolumn{3}{c|}{9.44} & \multicolumn{3}{c|}{14.84} & \multicolumn{3}{c}{63.94} \\
\midrule
\multirow{4}{*}{\textbf{W4A4KV4}}
  & S, per-ch  & 7.95 & 7.34 & \cellcolor{darkgreen!10}\textbf{7.26} & 13.44 & 12.16 & \cellcolor{darkgreen!10}\textbf{12.08} & 19.94 & 17.94 & \cellcolor{darkgreen!10}\textbf{17.74} & 58.55 & \textbf{60.55} & \cellcolor{darkgreen!10}60.52 \\
  & S, g128    & 8.99 & 7.27 & \cellcolor{darkgreen!10}\textbf{7.21} & 14.92 & 12.07 & \cellcolor{darkgreen!10}\textbf{11.98} & 22.00 & 17.75 & \cellcolor{darkgreen!10}\textbf{17.52} & 56.61 & 59.02 & \cellcolor{darkgreen!10}\textbf{60.17} \\
  & A, per-ch  & 8.11 & 7.35 & \cellcolor{darkgreen!10}\textbf{7.24} & 13.58 & 12.14 & \cellcolor{darkgreen!10}\textbf{12.05} & 19.97 & 17.92 & \cellcolor{darkgreen!10}\textbf{17.66} & 58.71 & 59.44 & \cellcolor{darkgreen!10}\textbf{60.26} \\
  & A, g128    & 9.22 & 7.26 & \cellcolor{darkgreen!10}\textbf{7.17} & 15.27 & 11.94 & \cellcolor{darkgreen!10}\textbf{11.86} & 21.59 & 17.74 & \cellcolor{darkgreen!10}\textbf{17.45} & 56.22 & 59.72 & \cellcolor{darkgreen!10}\textbf{60.86} \\
\midrule
\multirow{4}{*}{\textbf{W3A4KV4}}
  & S, per-ch  & 109.11 & 8.50 & \cellcolor{darkgreen!10}\textbf{8.02} & 117.26 & 15.01 & \cellcolor{darkgreen!10}\textbf{14.29} & 303.58 & 21.27 & \cellcolor{darkgreen!10}\textbf{20.13} & 38.01 & 56.51 & \cellcolor{darkgreen!10}\textbf{57.59} \\
  & S, g128    & 97.34 & 8.40 & \cellcolor{darkgreen!10}\textbf{8.01} & 112.94 & 14.79 & \cellcolor{darkgreen!10}\textbf{14.11} & 283.83 & 20.82 & \cellcolor{darkgreen!10}\textbf{19.86} & 38.81 & 56.08 & \cellcolor{darkgreen!10}\textbf{57.29} \\
  & A, per-ch  & 111.04 & 8.25 & \cellcolor{darkgreen!10}\textbf{8.05} & 137.82 & 14.34 & \cellcolor{darkgreen!10}\textbf{14.27} & 370.68 & 20.73 & \cellcolor{darkgreen!10}\textbf{20.24} & 37.69 & 56.31 & \cellcolor{darkgreen!10}\textbf{57.91} \\
  & A, g128    & 70.16 & 8.14 & \cellcolor{darkgreen!10}\textbf{7.91} & 66.87 & 13.93 & \cellcolor{darkgreen!10}\textbf{13.82} & 167.00 & 20.17 & \cellcolor{darkgreen!10}\textbf{19.68} & 40.85 & 57.14 & \cellcolor{darkgreen!10}\textbf{58.34} \\
\midrule
\multirow{4}{*}{\textbf{W2A4KV4}}
  & S, per-ch  & $1.6\text{e}6$ & 50.27 & \cellcolor{darkgreen!10}\textbf{15.65} & $1.7\text{e}6$ & 179.57 & \cellcolor{darkgreen!10}\textbf{45.33} & $1.9\text{e}6$ & 160.66 & \cellcolor{darkgreen!10}\textbf{74.80} & 35.47 & 39.52 & \cellcolor{darkgreen!10}\textbf{45.63} \\
  & S, g128    & $2.0\text{e}6$ & 29.36 & \cellcolor{darkgreen!10}\textbf{15.22} & $1.9\text{e}6$ & 89.24 & \cellcolor{darkgreen!10}\textbf{43.63} & $1.9\text{e}6$ & 114.14 & \cellcolor{darkgreen!10}\textbf{58.67} & 35.48 & 40.47 & \cellcolor{darkgreen!10}\textbf{44.64} \\
  & A, per-ch  & 5777 & 16.39 & \cellcolor{darkgreen!10}\textbf{14.95} & 4214 & \textbf{40.18} & \cellcolor{darkgreen!10}42.12 & 4289 & 60.55 & \cellcolor{darkgreen!10}\textbf{58.14} & 34.73 & 42.70 & \cellcolor{darkgreen!10}\textbf{44.79} \\
  & A, g128    & 6777 & 15.32 & \cellcolor{darkgreen!10}\textbf{14.02} & 6030 & 36.16 & \cellcolor{darkgreen!10}\textbf{35.68} & 5608 & 51.00 & \cellcolor{darkgreen!10}\textbf{47.44} & 34.53 & 42.44 & \cellcolor{darkgreen!10}\textbf{47.55} \\
\midrule[\heavyrulewidth]
\multicolumn{14}{c}{\cellcolor{blue!8}\textbf{LLaMA-3.2 1B-Instruct}} \\
\midrule
\rowcolor{gray!10} \multicolumn{2}{l|}{\textbf{BF16 (dense)}} & \multicolumn{3}{c|}{13.16} & \multicolumn{3}{c|}{21.30} & \multicolumn{3}{c|}{37.31} & \multicolumn{3}{c}{53.22} \\
\midrule
\multirow{4}{*}{\textbf{W4A4KV4}}
  & S, per-ch  & 15.97 & 15.81 & \cellcolor{darkgreen!10}\textbf{15.55} & 27.79 & 26.11 & \cellcolor{darkgreen!10}\textbf{25.80} & 53.59 & 48.69 & \cellcolor{darkgreen!10}\textbf{47.76} & 47.74 & 48.27 & \cellcolor{darkgreen!10}\textbf{49.31} \\
  & S, g128    & 19.33 & 15.67 & \cellcolor{darkgreen!10}\textbf{15.42} & 31.94 & 25.76 & \cellcolor{darkgreen!10}\textbf{25.34} & 62.57 & 47.55 & \cellcolor{darkgreen!10}\textbf{47.53} & 46.75 & 49.01 & \cellcolor{darkgreen!10}\textbf{49.72} \\
  & A, per-ch  & 17.09 & 15.74 & \cellcolor{darkgreen!10}\textbf{15.60} & 29.37 & 25.85 & \cellcolor{darkgreen!10}\textbf{25.69} & 55.43 & 48.59 & \cellcolor{darkgreen!10}\textbf{47.18} & 46.90 & 48.77 & \cellcolor{darkgreen!10}\textbf{49.38} \\
  & A, g128    & 20.05 & 15.50 & \cellcolor{darkgreen!10}\textbf{15.26} & 31.94 & 25.39 & \cellcolor{darkgreen!10}\textbf{25.36} & 60.66 & 47.39 & \cellcolor{darkgreen!10}\textbf{47.35} & 47.18 & 48.52 & \cellcolor{darkgreen!10}\textbf{49.36} \\
\midrule
\multirow{4}{*}{\textbf{W3A4KV4}}
  & S, per-ch  & 96.35 & 19.70 & \cellcolor{darkgreen!10}\textbf{17.88} & 159.31 & 34.07 & \cellcolor{darkgreen!10}\textbf{31.16} & 318.13 & 65.22 & \cellcolor{darkgreen!10}\textbf{56.65} & 38.61 & 45.76 & \cellcolor{darkgreen!10}\textbf{46.84} \\
  & S, g128    & 149.68 & 19.33 & \cellcolor{darkgreen!10}\textbf{18.06} & 227.31 & 33.50 & \cellcolor{darkgreen!10}\textbf{31.26} & 438.08 & 61.71 & \cellcolor{darkgreen!10}\textbf{55.99} & 37.54 & 45.62 & \cellcolor{darkgreen!10}\textbf{46.70} \\
  & A, per-ch  & 206.10 & 18.99 & \cellcolor{darkgreen!10}\textbf{18.05} & 355.45 & 32.43 & \cellcolor{darkgreen!10}\textbf{31.37} & 568.98 & 60.91 & \cellcolor{darkgreen!10}\textbf{57.47} & 36.99 & 45.78 & \cellcolor{darkgreen!10}\textbf{46.15} \\
  & A, g128    & 169.17 & 18.40 & \cellcolor{darkgreen!10}\textbf{17.55} & 209.44 & 30.90 & \cellcolor{darkgreen!10}\textbf{30.32} & 374.47 & 58.43 & \cellcolor{darkgreen!10}\textbf{56.59} & 37.14 & 45.88 & \cellcolor{darkgreen!10}\textbf{47.26} \\
\midrule
\multirow{4}{*}{\textbf{W2A4KV4}}
  & S, per-ch  & $2.5\text{e}6$ & 172.16 & \cellcolor{darkgreen!10}\textbf{43.98} & $2.1\text{e}6$ & 382.96 & \cellcolor{darkgreen!10}\textbf{123.33} & $2.5\text{e}6$ & 489.33 & \cellcolor{darkgreen!10}\textbf{157.51} & 36.67 & 36.85 & \cellcolor{darkgreen!10}\textbf{39.86} \\
  & S, g128    & $2.8\text{e}6$ & 131.90 & \cellcolor{darkgreen!10}\textbf{44.45} & $2.7\text{e}6$ & 370.65 & \cellcolor{darkgreen!10}\textbf{138.69} & $2.6\text{e}6$ & 440.34 & \cellcolor{darkgreen!10}\textbf{184.64} & 36.94 & 37.74 & \cellcolor{darkgreen!10}\textbf{38.67} \\
  & A, per-ch  & $1.2\text{e}4$ & 46.40 & \cellcolor{darkgreen!10}\textbf{41.84} & $1.2\text{e}4$ & 135.55 & \cellcolor{darkgreen!10}\textbf{123.04} & $1.1\text{e}4$ & 195.94 & \cellcolor{darkgreen!10}\textbf{166.27} & 34.58 & 38.20 & \cellcolor{darkgreen!10}\textbf{39.26} \\
  & A, g128    & $1.3\text{e}4$ & 46.76 & \cellcolor{darkgreen!10}\textbf{37.95} & $1.2\text{e}4$ & 118.29 & \cellcolor{darkgreen!10}\textbf{95.90} & $1.6\text{e}4$ & 174.98 & \cellcolor{darkgreen!10}\textbf{140.65} & 36.05 & \textbf{40.05} & \cellcolor{darkgreen!10}39.09 \\
\bottomrule
\end{tabular}%
}
\caption{RTN, GPTQ, and \ourmethod within the SpinQuant pipeline across two LLaMA models and weight bit-widths 2, 3, 4 (all W$b$A4KV4). ``S'' = symmetric, ``A'' = asymmetric; ``per-ch'' = per-channel, ``g128'' = groupsize 128. Best result per cell is \textbf{bolded}.}
\label{tab:sq-llama-results}
\end{minipage}}
\end{table}

\vspace{-5mm}

\begin{table}[H]
\centering
\scalebox{0.88}{%
\renewcommand{\arraystretch}{0.92}
\newcolumntype{C}[1]{>{\centering\arraybackslash}p{#1}}%
\setlength{\tabcolsep}{4pt}%
\small
\begin{tabular}{ll|C{3.2em}C{3.2em}|C{3.2em}C{3.2em}|C{3.2em}C{3.2em}|C{3.2em}C{3.2em}}
\toprule
& & \multicolumn{2}{c|}{\textbf{Wiki-2 PPL} ($\downarrow$)} & \multicolumn{2}{c|}{\textbf{C4 PPL} ($\downarrow$)} &
\multicolumn{2}{c|}{\textbf{PTB PPL} ($\downarrow$)} & \multicolumn{2}{c}{\textbf{0-shot Acc.} ($\uparrow$)} \\
\cmidrule(lr){3-4} \cmidrule(lr){5-6} \cmidrule(lr){7-8} \cmidrule(lr){9-10}
\textbf{Config} & \textbf{Wt.\ Quant} & \texttt{GPTQ} & \cellcolor{darkgreen!10}\ourmethod & \texttt{GPTQ} &
\cellcolor{darkgreen!10}\ourmethod & \texttt{GPTQ} & \cellcolor{darkgreen!10}\ourmethod & \texttt{GPTQ} &
\cellcolor{darkgreen!10}\ourmethod \\
\midrule[\heavyrulewidth]
\multicolumn{10}{c}{\cellcolor{blue!8}\textbf{Qwen3-8B (SmoothQuant, W$b$A8)}} \\[-0.6ex]
\midrule
\rowcolor{gray!10} \multicolumn{2}{l|}{\textbf{BF16 (dense)}} & \multicolumn{2}{c|}{7.00} & \multicolumn{2}{c|}{11.91} &
\multicolumn{2}{c|}{17.12} & \multicolumn{2}{c}{69.08} \\
\textbf{W4A8}
  & per-ch  & 9.29 & \cellcolor{darkgreen!10}\textbf{8.68} & 16.82 & \cellcolor{darkgreen!10}\textbf{15.69} & 25.60 &
\cellcolor{darkgreen!10}\textbf{23.44} & 65.74 & \cellcolor{darkgreen!10}\textbf{66.10} \\
\textbf{W3A8}
  & per-ch  & 16.00 & \cellcolor{darkgreen!10}\textbf{12.21} & 31.13 & \cellcolor{darkgreen!10}\textbf{24.98} & 50.62 &
\cellcolor{darkgreen!10}\textbf{38.14} & \textbf{54.91} & \cellcolor{darkgreen!10}54.64 \\
\textbf{W2A8}
  & per-ch  & 130.04 & \cellcolor{darkgreen!10}\textbf{18.89} & 204.11 & \cellcolor{darkgreen!10}\textbf{68.70} & 268.67 &
\cellcolor{darkgreen!10}\textbf{79.65} & 41.38 & \cellcolor{darkgreen!10}\textbf{46.13} \\
\midrule[\heavyrulewidth]
\multicolumn{10}{c}{\cellcolor{blue!8}\textbf{Qwen3-8B (SpinQuant, W$b$A4KV4)}} \\[-0.6ex]
\midrule
\rowcolor{gray!10} \multicolumn{2}{l|}{\textbf{BF16 (dense)}} & \multicolumn{2}{c|}{7.00} & \multicolumn{2}{c|}{11.91} &
\multicolumn{2}{c|}{17.12} & \multicolumn{2}{c}{69.08} \\
\textbf{W4A4KV4}
  & per-ch  & 11.80 & \cellcolor{darkgreen!10}\textbf{11.77} & 20.84 & \cellcolor{darkgreen!10}\textbf{20.82} & 37.64 &
\cellcolor{darkgreen!10}\textbf{37.58} & 57.67 & \cellcolor{darkgreen!10}\textbf{57.96} \\
\textbf{W3A4KV4}
  & per-ch  & 13.70 & \cellcolor{darkgreen!10}\textbf{13.11} & 25.57 & \cellcolor{darkgreen!10}\textbf{24.40} & 43.19 &
\cellcolor{darkgreen!10}\textbf{41.35} & 53.97 & \cellcolor{darkgreen!10}\textbf{54.70} \\
\textbf{W2A4KV4}
  & per-ch  & 66.11 & \cellcolor{darkgreen!10}\textbf{19.42} & 3324.38 & \cellcolor{darkgreen!10}\textbf{59.27} & 200.41 &
\cellcolor{darkgreen!10}\textbf{81.67} & 43.28 & \cellcolor{darkgreen!10}\textbf{46.29} \\
\bottomrule
\end{tabular}%
}
\caption{GPTQ vs.\ \ourmethod on Qwen3-8B-Base across:  weight-and-activation (W$b$A8), and SpinQuant (W$b$A4KV4). Best result per cell is \textbf{bolded}.}
\label{tab:qwen3-results}
\end{table}
\paragraph{Inference speed-ups in VLLM.} We integrate ADMM-Q into the llm-compressor/vLLM stack and benchmark inference latency at W8A8 under both SmoothQuant and SpinQuant on Qwen3 models (8B, 14B, 32B). Table \ref{tab:qwen_latency_speedup_gptq_admmq} reports speedups relative to the dense BF16 baseline, measured over 2000 Wikitext-2 sequences processed in batches of 100. We use 800 tokens of pre-fill and 256 tokens of decoding. Both GPTQ and ADMM-Q achieve nearly identical throughput gains: for example, on Qwen3-32B with SpinQuant W8A8, ADMM-Q achieves a 1.52× speedup, matching GPTQ's 1.52×. This is expected since both methods produce weight matrices in the same quantized format and differ only in the offline optimization procedure used to obtain them. We note that vLLM does not currently support W4A4 inference kernels on standard NVIDIA hardware, though hardware and kernel support for sub-8-bit weight-and-activation quantization is under active development. In Table \ref{tab:qwen3_runtime_memory}, we also provide inference memory and storage memory for GPTQ and \ourmethod, and show that they are also close to identical.

\begin{table}[H]
\centering
\small
\setlength{\tabcolsep}{5pt}
\begin{tabular}{lccc}
\toprule
\textbf{Method} & \textbf{8B-Base} & \textbf{14B-Base} & \textbf{32B} \\
\midrule
\rowcolor{gray!10}
\textbf{Dense BF16} 
& $1.0000$
& $1.0000$
& $1.0000$ \\
\midrule
\multicolumn{4}{c}{\cellcolor{blue!8}\textbf{SmoothQuant W8A8}} \\
\midrule
\texttt{GPTQ}
& $1.2393 \pm 0.0012$
& $1.3722 \pm 0.0010$
& \textbf{$1.4844 \pm 0.0023$} \\
\cellcolor{darkgreen!10}\ourmethod
& \cellcolor{darkgreen!10}\textbf{$1.2408 \pm 0.0012$}
& \cellcolor{darkgreen!10}\textbf{$1.3827 \pm 0.0029$}
& \cellcolor{darkgreen!10}$1.4808 \pm 0.0012$ \\
\midrule[\heavyrulewidth]
\multicolumn{4}{c}{\cellcolor{blue!8}\textbf{SpinQuant W8A8}} \\
\midrule
\texttt{GPTQ}
& \textbf{$1.2554 \pm 0.0014$}
& $1.3918 \pm 0.0008$
& $1.5158 \pm 0.0035$ \\
\cellcolor{darkgreen!10}\ourmethod
& \cellcolor{darkgreen!10}$1.2507 \pm 0.0015$
& \cellcolor{darkgreen!10}\textbf{$1.3977 \pm 0.0041$}
& \cellcolor{darkgreen!10}\textbf{$1.5220 \pm 0.0015$} \\
\bottomrule
\end{tabular}
\vspace{3pt}
\caption{Inference latency speedup relative to the dense BF16 baseline for Qwen3 models in VLLM under GPTQ and ADMMQ. Results are computed over 2000 sequences (processed in batches of 100). Values are reported as mean $\pm$ standard error.}
\label{tab:qwen_latency_speedup_gptq_admmq}
\end{table}

\subsection{Other quantization formats}
\label{sec:other-formats}
\crnew{
Because \ourmethod enforces discrete constraints only through its projection step, it can be combined with alternative quantization formats by changing the projection while leaving the continuous and dual updates unchanged.
ICQuant~\citep{li2025icquant} identifies per-channel outliers and quantizes outliers and inliers on separate grids. We fix the outlier set returned by ICQuant and apply the corresponding dual-grid projection in the $\mathbf D$-update.
Tables~\ref{tab:icquant-1p7b} and~\ref{tab:icquant-4b} in Appendix~\ref{app:nonuniform-reasoning} show that \ourmethod+ICQuant improves over both GPTQ+ICQuant and standalone \ourmethod on Qwen3-1.7B and 4B, especially at W2/W3.
SqueezeLLM~\citep{kim2024squeezellmdenseandsparsequantization} combines a sensitivity-based non-uniform codebook with a dense-and-sparse decomposition.
Using the same codebook and sparse mask, \ourmethod replaces nearest-neighbor rounding on the dense entries by ADMM updates.
Tables~\ref{tab:squeezellm-w3} and~\ref{tab:squeezellm-w2} report consistent gains over SqueezeLLM alone across Qwen3 (1.7B--32B), with the largest improvements at W2 (e.g., Qwen3-8B WikiText-2 / C4 from $15.07/24.94$ to $9.66/19.16$).
}

\subsection{Reasoning evaluations}
\label{sec:reasoning}
\crnew{
Perplexity and standard zero-shot commonsense benchmarks do not fully capture quantized model behavior on harder reasoning tasks. We therefore evaluate Qwen3-8B (the post-trained reasoning checkpoint) under asymmetric weight-only quantization with group size 128 on AIME-120, MATH-500, and GSM8K, following the protocol of~\citet{liu2025quantizationhurtsreasoningempirical}. Full results are in Tables~\ref{tab:reasoning-w3} and~\ref{tab:reasoning-w4} (Appendix~\ref{app:nonuniform-reasoning}). At W3, \ourmethod attains the best reasoning accuracy among quantized methods on all three tasks. At W4, methods are closer on MATH-500 and GSM8K, while \ourmethod is strongest on the hardest benchmark (AIME-120).
}

\section{Conclusion and limitations}
\label{sec:conclusion}

We introduce \ourmethod, a principled layer-wise quantization method for post-training quantization of LLMs based on ADMM. By formulating layer-wise quantization as a Hessian-weighted constrained reconstruction problem, \ourmethod jointly optimizes the quantized weights within each layer rather than processing them greedily. We prove convergence guarantees and show that \ourmethod can serve as a drop-in replacement for standard layer-wise quantization solvers in PTQ pipelines that use equivalent transformations such as scaling and rotation. Empirically, \ourmethod substantially improves perplexity and zero-shot accuracy across a broad range of model sizes, from Qwen 1.7B--32B in the weight-only setting and from 1B--8B in our SpinQuant experiments on Qwen and LLaMA, with especially pronounced gains at more aggressive quantization levels.

\noindent\textbf{Limitations.} Although \ourmethod consistently improves quantized model quality, it introduces additional offline quantization cost relative to lightweight baselines such as RTN and GPTQ because it solves an iterative optimization problem for each layer. Similar to other PTQ approaches, \ourmethod relies on calibration activations and a Hessian approximation derived from them, so its performance may depend on how representative the calibration set is for the downstream task. Moreover, we evaluate \ourmethod only on dense autoregressive decoder-only LLMs, not on mixture-of-experts architectures, diffusion language models, or vision transformers. Although the optimization framework is general, these settings have different structure and may require nontrivial adaptation.

\section*{Acknowledgements}
The authors acknowledge research support from the Office of Naval Research Grant N000142512504.

\newpage
\bibliographystyle{plainnat}
\bibliography{ref}
\newpage 
\appendix
\section*{Appendix}\label{app:appendix}
\section{Integration of \ourmethod\ with equivalent transformations}
\label{app:equiv-transform}

We spell out in more detail how equivalent transformations modify the local reconstruction problem solved by \ourmethod. In each case, the transformation rewrites the layer in an equivalent form, after which \ourmethod\ is applied to the transformed weight variable.

\paragraph{Scaling.}
Suppose a per-channel scaling matrix \(\mathbf S\) is applied before quantization. Then the layer can be rewritten as:
\[
\mathbf Y=\mathbf X\mathbf W=(\mathbf X\mathbf S^{-1})(\mathbf S\mathbf W).
\]
Define the scaled weight:
\[
\mathbf U=\mathbf S\mathbf W,
\qquad
\widehat{\mathbf U}=\mathbf S\widehat{\mathbf W}.
\]
Then \ourmethod\ is applied to the transformed local reconstruction problem
\[
\min_{\mathbf U}\; \frac12\mathrm{Tr}\left((\mathbf U-\widehat{\mathbf U})^\top
\mathbf H_{\mathrm{sc}}
(\mathbf U-\widehat{\mathbf U})\right)
\qquad
\text{s.t.}\qquad
\mathbf U\in\mathcal Q
\]
where $
\mathbf H_{\mathrm{sc}}
=
\mathbf X_{\mathrm{sc}}^\top\mathbf X_{\mathrm{sc}}
=
(\mathbf X\mathbf S^{-1})^\top(\mathbf X\mathbf S^{-1})
=
(\mathbf S^{-1})^\top \mathbf H \mathbf S^{-1}$. Similar to \cite{xiao2023smoothquant}, we directly store the scaled quantized weights.

\paragraph{Rotation.}
Suppose rotation matrices \(\mathbf R_1,\mathbf R_2\) are applied before quantization. Then the layer can be rewritten as:
\[
\mathbf Y=\mathbf X\mathbf W
=
(\mathbf X\mathbf R_1)(\mathbf R_1^{-1}\mathbf W\mathbf R_2)\mathbf R_2^{-1}
\]
Define the rotated weight:
\[
\mathbf U=\mathbf R_1^{-1}\mathbf W\mathbf R_2,
\qquad
\widehat{\mathbf U}=\mathbf R_1^{-1}\widehat{\mathbf W}\mathbf R_2
\]
Then \ourmethod\ is applied to the transformed local reconstruction problem:
\[
\min_{\mathbf U}\; \frac12\mathrm{Tr}\left((\mathbf U-\widehat{\mathbf U})^\top
\mathbf H_{\mathrm{rot}}
(\mathbf U-\widehat{\mathbf U})\right)
\qquad
\text{s.t.}\qquad
\mathbf U\in\mathcal Q
\]
where $\mathbf H_{\mathrm{rot}}
=
\mathbf X_{\mathrm{rot}}^\top\mathbf X_{\mathrm{rot}}
=
(\mathbf X\mathbf R_1)^\top(\mathbf X\mathbf R_1)
=
\mathbf R_1^\top\mathbf H\mathbf R_1$. As in other work \cite{liu2024spinquant}, we directly store the rotated quantized weights.

\section{Additional Details on Grid Refresh}
\label{app:grid-refresh}

Here we give more detail on the grid-refresh step used in \ourmethod. The main point is that the quantization grid is not part of the ADMM state in the same way as \(\mathbf W\), \(\mathbf D\), and \(\mathbf V\); rather, it determines how the projection in the \(\mathbf D\)-update is carried out. If the grid is chosen once at initialization and then held fixed, it can become misaligned with the continuous trajectory produced by ADMM.

\paragraph{Stale fixed grid.}
Let \(\theta=(\Delta,z_0)\) denote the scale and zero-point that determine the quantization grid \(\mathcal Q(\theta)\). At iteration \(t\), the point projected in the \(\mathbf D\)-update is
\[
\widetilde{\mathbf W}^{(t)}=\mathbf W^{(t+1)}+\mathbf V^{(t)}/\rho_t.
\]
With a fixed grid \(\theta\), the discrete update is
\[
\mathbf D^{(t+1)}
=
\bm{\Sigma}^{-1}\operatorname{Proj}_{\mathcal Q(\theta)}\!\bigl(\bm{\Sigma}\widetilde{\mathbf W}^{(t)}\bigr).
\]
In practice, \(\theta\) is typically initialized by RTN to the dense weights. Early in the ADMM iterations this is a reasonable choice, but as the continuous iterates move, the projection induced by that initial grid may no longer be well matched to the current \(\widetilde{\mathbf W}^{(t)}\). The projected point can therefore be unnecessarily far from the current continuous ADMM target.

\paragraph{Refreshed grid.}
To address this, we equip \ourmethod with the ability to recompute the grid parameters from the current continuous weights using the same scale/zero-point fitting routine used at initialization. This gives refreshed parameters \(\theta_{\mathrm{new}}=(\Delta_{\mathrm{new}}, z_{0,\mathrm{new}})\), and the same target \(\widetilde{\mathbf W}^{(t)}\) is then re-projected onto the refreshed grid:
\[
\mathbf D_{\mathrm{new}}^{(t+1)}
=
\bm{\Sigma}^{-1}\operatorname{Proj}_{\mathcal Q(\theta_{\mathrm{new}})}\!\bigl(\bm{\Sigma}\widetilde{\mathbf W}^{(t)}\bigr).
\]
The role of the refresh is therefore not to change the continuous ADMM iterate itself, but to update the projection map so that the discrete point better matches the current continuous trajectory.

\paragraph{Acceptance and dual consistency.}
We accept a refreshed grid only when the updated quantized weights improves $\mathbf{D}$-update objective. If the current projected point is \(\mathbf D_{\mathrm{old}}^{(t+1)}\) and the refreshed projection is \(\mathbf D_{\mathrm{new}}^{(t+1)}\), the acceptance criteria reads:
\begin{equation}\label{eq:refreshcri}
\left\| \mathbf{D}^{(t+1)}_{\mathrm{new}} - \left(\mathbf{W}^{(t+1)}+ \mathbf{V}^{(t)}/\rho_t\right) \right\|_F^2 \le \left\| \mathbf{D}^{(t+1)}_{\mathrm{old}} - \left(\mathbf{W}^{(t+1)}+ \mathbf{V}^{(t)}/\rho_t\right) \right\|_F^2.
\end{equation}
After accepting the refresh we replace the discrete point by \(\mathbf D_{\mathrm{new}}^{(t+1)}\) and update the dual variable as:
\begin{equation}
    \mathbf V
\leftarrow
\mathbf V
+
\rho_t\bigl(\mathbf D_{\mathrm{old}}^{(t+1)}-\mathbf D_{\mathrm{new}}^{(t+1)}\bigr).
\end{equation}
This preserves consistency of the current ADMM state after the discrete projection has been changed. To avoid repeated interventions in the projection map from affecting the convergence behavior of the base ADMM iterations, we allow at most one accepted grid refresh.

\section{Proof of \cref{thm:admmq}}\label{app:proofadmm}
We first build a key inequality: 
\begin{equation}\label{eq:proof-eq}
\begin{aligned}
\left\| \mathbf{D}^{(t+1)} - \left(\mathbf{W}^{(t+1)}+ \mathbf{V}^{(t)}/\rho_t\right) \right\|_F^2 \le \left\| \mathbf{D}^{(t)} - \left(\mathbf{W}^{(t+1)}+ \mathbf{V}^{(t)}/\rho_t\right) \right\|_F^2.
\end{aligned}
\end{equation}
\begin{proof}
We consider two scenarios.\\ \textbf{Case 1:} The quantization grid is fixed during this iteration. Since the update rule \eqref{eq:projection} rounds each element of $\mathbf{W}^{(t+1)} + \mathbf{V}^{(t)}/\rho_t$ to its nearest point on the quantization grid, it turns out that $\mathbf{D}^{t+1}$ satisfies
    \begin{equation}
        \mathbf{D}^{(t+1)} = \argmin_{\bm{\Sigma}\mathbf{D} \in \mathcal{Q}} \|\mathbf{D} - (\mathbf{W}^{(t+1)} + \mathbf{V}^{(t)}/\rho_t)\|_F^2.
    \end{equation}
    Therefore, for any $\mathbf{D}$ such that $\bm{\Sigma}\mathbf{D} \in \mathcal{Q}$, it holds
    \begin{equation}
\begin{aligned}
\left\| \mathbf{D}^{(t+1)} - \left(\mathbf{W}^{(t+1)}+ \mathbf{V}^{(t)}/\rho_t\right) \right\|_F^2 \le \left\| \mathbf{D} - \left(\mathbf{W}^{(t+1)}+ \mathbf{V}^{(t)}/\rho_t\right) \right\|_F^2.
\end{aligned}
\end{equation}
Since $\bm{\Sigma}\mathbf{D}^{(t)} \in \mathcal{Q}$, inequality \eqref{eq:proof-eq} holds.

\textbf{Case 2:} The quantization grid has been updated during this iteration. According to the algorithm, we can simplify the update rule as: performing $\mathbf{W}$-update as usual; find $\mathbf{D}^{(t+1)}_{\mathrm{new}}$ on the new quantization grid; and finally the $\mathbf{V}$-update $\mathbf{V}^{(t+1)} = \mathbf{V}^{(t)} + \rho_t(\mathbf{W}^{(t+1)} - \mathbf{D}^{(t+1)}_{\mathrm{new}})$. Note that we will only refresh the grid only if it improves $\mathbf{D}$-update objective. Together with the discussion in Case 1, it follows that
\begin{equation}
\begin{aligned}
\left\| \mathbf{D}^{(t+1)}_{\mathrm{new}} - \left(\mathbf{W}^{(t+1)}+ \mathbf{V}^{(t)}/\rho_t\right) \right\|_F^2 &\le \left\| \mathbf{D}^{(t+1)}_{\mathrm{old}} - \left(\mathbf{W}^{(t+1)}+ \mathbf{V}^{(t)}/\rho_t\right) \right\|_F^2 \\ &\le \left\| \mathbf{D}^{(t)} - \left(\mathbf{W}^{(t+1)}+ \mathbf{V}^{(t)}/\rho_t\right) \right\|_F^2.
\end{aligned}
\end{equation}
\end{proof}

The rest of convergence proof builds primarily on the analysis of \citep[Theorem 1]{meng2024alps}, since the update rules for $\mathbf{W}$ and $\mathbf{V}$ remain unchanged. The convergence argument in \citep[Theorem 1]{meng2024alps} transfers to our setting without modification. We conclude that both $\{\mathbf{D}^{(t)}\}_{t=0}^\infty$ and $\{\mathbf{W}^{(t)}\}_{t=0}^\infty$ converge to a shared limit $\mathbf{\bar{D}}$, which completes the proof.

\section{Additional Experimental Details}\label{app:appendix-exp-details}
\paragraph{Computing environments.} All experiments were conducted on a computing cluster. Unless otherwise specified, we utilized an Intel Xeon Gold 6248 machine with 16 CPU cores and a single NVIDIA L40 48GB / A100 80GB / H100 80GB GPU/ H200 144GB GPU. When runtime compression results are reported, all experiments have been run on the same node (including GPU) configuration. All language models and quantization methods were implemented using the PyTorch library (\cite{paszke2017automatic}).

\paragraph{Implementation details of \ourmethod.}
Following \cite{frantar2022gptq}, we add a damping term to the Hessian: $\mathbf{H}' = \mathbf{H}+0.01\cdot\operatorname{Tr}(\mathbf{X}^\top\mathbf{X})\mathbf{I}$.

We use $T=300$ maximum number of ADMM iterations with initial penalty $\rho_0 = 0.1$ and $\gamma = 1.1$. 

The MSE grid search for clipping ratios uses 100 grid points with maximum shrink ratio 0.8. For integration with SpinQuant, \ourmethod replaces only the GPTQ weight quantization step; all rotation optimization, activation quantization, and KV cache quantization remain unchanged from the SpinQuant pipeline.

\paragraph{Algorithm Runtime and Memory Results.}
Table~\ref{tab:qwen3_runtime_memory} reports wall-clock quantization time and peak GPU memory usage for GPTQ and ADMM-Q on Qwen3 weight-only quantization. While ADMM-Q consistently delivers better quantized model quality in our experiments, it does so by replacing GPTQ's greedy one-pass updates with an iterative optimization procedure, so some additional offline cost is expected. The table shows that the runtime overhead grows with model size, but remains practically manageable: even for Qwen3-32B, quantization finishes in under two hours for both methods. Peak memory, reported as a ratio normalized so that \ourmethod\ equals 1.0, shows that GPTQ uses roughly 94\% of \ourmethod's peak across all model sizes, a consistent but modest difference. Since weight quantization is a one-time preprocessing step rather than part of inference-time deployment, we view both the increase in runtime and the slightly higher peak memory as reasonable tradeoffs for the consistent gains in perplexity and zero-shot accuracy.

\begin{table}[H]
\centering
\footnotesize
\setlength{\tabcolsep}{4.0pt}
\renewcommand{\arraystretch}{1.05}
\resizebox{\textwidth}{!}{%
\begin{tabular}{l|cc|cc|c|cc|c|cc}
\toprule
\textbf{Model}
& \multicolumn{2}{c|}{\textbf{Runtime (min)}}
& \multicolumn{2}{c|}{\textbf{Peak quant. memory}}
& \multicolumn{3}{c|}{\textbf{Peak inference memory}} 
& \multicolumn{3}{c}{\textbf{Storage memory}} \\
\cmidrule(lr){2-3} \cmidrule(lr){4-5} \cmidrule(lr){6-8} \cmidrule(lr){9-11}
& \texttt{GPTQ} & \cellcolor{darkgreen!10}\ourmethod
& \texttt{GPTQ} & \cellcolor{darkgreen!10}\ourmethod
& \cellcolor{gray!10}\textbf{Dense} & \texttt{GPTQ} & \cellcolor{darkgreen!10}\ourmethod 
& \cellcolor{gray!10}\textbf{Dense} & \texttt{GPTQ} & \cellcolor{darkgreen!10}\ourmethod \\
\midrule
1.7B-Base & \textbf{1.95}  & \cellcolor{darkgreen!10} 2.22   & \textbf{0.956 $\pm$ 0.012} & \cellcolor{darkgreen!10} 1.000 & \cellcolor{gray!10} 1.000 & \textbf{0.285 $\pm$ 0.013} & \cellcolor{darkgreen!10} 0.294 $\pm$ 0.024 & \cellcolor{gray!10} 1.000 & \textbf{0.255} & \cellcolor{darkgreen!10} \textbf{0.255} \\
4B-Base   & \textbf{4.44}  & \cellcolor{darkgreen!10} 7.20   & \textbf{0.943 $\pm$ 0.014} & \cellcolor{darkgreen!10} 1.000 & \cellcolor{gray!10} 1.000 & \textbf{0.281 $\pm$ 0.015} & \cellcolor{darkgreen!10} 0.285 $\pm$ 0.018 & \cellcolor{gray!10} 1.000 & \textbf{0.254} & \cellcolor{darkgreen!10} 0.255 \\
8B-Base   & \textbf{7.31}  & \cellcolor{darkgreen!10} 13.91  & \textbf{0.938 $\pm$ 0.013} & \cellcolor{darkgreen!10} 1.000 & \cellcolor{gray!10} 1.000 & \textbf{0.279 $\pm$ 0.017} & \cellcolor{darkgreen!10} 0.282 $\pm$ 0.012 & \cellcolor{gray!10} 1.000 & \textbf{0.253} & \cellcolor{darkgreen!10} 0.254 \\
14B-Base  & \textbf{13.27} & \cellcolor{darkgreen!10} 33.79  & \textbf{0.944 $\pm$ 0.012} & \cellcolor{darkgreen!10} 1.000 & \cellcolor{gray!10} 1.000 & \textbf{0.284 $\pm$ 0.011} & \cellcolor{darkgreen!10} \textbf{0.284 $\pm$ 0.012} & \cellcolor{gray!10} 1.000 & \textbf{0.253} & \cellcolor{darkgreen!10} \textbf{0.253} \\
32B       & \textbf{33.97} & \cellcolor{darkgreen!10} 117.73 & \textbf{0.941 $\pm$ 0.013} & \cellcolor{darkgreen!10} 1.000 & \cellcolor{gray!10} 1.000 & \textbf{0.273 $\pm$ 0.010} & \cellcolor{darkgreen!10} \textbf{0.273 $\pm$ 0.008} & \cellcolor{gray!10} 1.000 & \textbf{0.252} & \cellcolor{darkgreen!10} 0.253 \\
\bottomrule
\end{tabular}
}
\caption{Wall-clock quantization time (minutes) and memory usage metrics. Peak quantization memory compares GPTQ and \ourmethod, normalized so \ourmethod is 1; values below 1 indicate lower peak memory usage than \ourmethod. Peak inference memory and storage memory, with Dense normalized to 1, indicate that both GPTQ and \ourmethod use approximately 28\% of the memory during inference and take up just over 25\% as much storage memory.}
\label{tab:qwen3_runtime_memory}
\end{table}

\paragraph{Ablation Study on \ourmethod Algorithm Components.}

Table~\ref{tab:admmq-ablation-qwen3-8b-w3w4} shows that the performance gains of \ourmethod\ come from the combination of its main components. At W4, the full method improves over \texttt{GPTQ} on all three perplexity metrics, reducing Wiki-2 from 7.82 to 7.56, C4 from 13.39 to 13.01, and PTB from 19.82 to 19.04, while remaining competitive in zero-shot accuracy. At W3, the same pattern is more pronounced: \ourmethod\ substantially improves over \texttt{GPTQ}, reducing Wiki-2 from 12.85 to 10.06, C4 from 21.27 to 18.25, and PTB from 35.01 to 27.30, while also improving zero-shot accuracy from 54.43 to 57.80. Among the individual ablations, removing diagonal preconditioning causes the largest loss, especially at 3 bits, indicating that conditioning the layer-wise subproblem is critical for effective ADMM updates. Removing adaptive $\rho$ also leads to a clear drop in quality, while local search provides a smaller but still meaningful gain overall. When all three components are disabled, performance deteriorates sharply, particularly at W3.

\begin{table}[H]
\centering
\smaller
\setlength{\tabcolsep}{3.6pt}
\renewcommand{\arraystretch}{1.12}
\newcolumntype{C}[1]{>{\centering\arraybackslash}p{#1}}
\begin{tabular}{l|C{3.0em}C{3.0em}C{3.0em}C{3.0em}|C{3.0em}C{3.0em}C{3.0em}C{3.0em}}
\toprule
& \multicolumn{4}{c|}{\cellcolor{blue!8}\textbf{W4}}
& \multicolumn{4}{c}{\cellcolor{blue!8}\textbf{W3}} \\
\textbf{Method}
& \rotatebox{50}{\textbf{Wiki-2}}
& \rotatebox{50}{\textbf{C4}}
& \rotatebox{50}{\textbf{PTB}}
& \rotatebox{50}{\textbf{0-shot}}
& \rotatebox{50}{\textbf{Wiki-2}}
& \rotatebox{50}{\textbf{C4}}
& \rotatebox{50}{\textbf{PTB}}
& \rotatebox{50}{\textbf{0-shot}} \\
\midrule
\rowcolor{gray!10}
Dense
& 7.00 & 11.91 & 17.12 & 69.07
& 7.00 & 11.91 & 17.12 & 69.07 \\
\texttt{GPTQ}
& 7.82 & 13.39 & 19.82 & \textbf{68.93}
& 12.85 & 21.27 & 35.01 & 54.43 \\
\midrule
\rowcolor{darkgreen!10}
\ourmethod\ (full)
& \textbf{7.56} & \textbf{13.01} & \textbf{19.04} & 68.14
& 10.06 & \textbf{18.25} & \textbf{27.30} & \textbf{57.80} \\
\quad w/o local search
& 7.58 & 13.11 & 19.22 & 68.41
& \textbf{9.93} & 18.44 & 28.95 & 57.36 \\
\quad w/o adaptive $\rho$
& 7.73 & 13.24 & 19.46 & 67.63
& 10.97 & 19.86 & 30.81 & 55.23 \\
\quad w/o diag.\ precond.
& 8.03 & 13.66 & 20.59 & 66.95
& 14.68 & 26.42 & 45.74 & 50.83 \\
\quad all off
& 9.64 & 16.34 & 26.52 & 65.79
& 42.47 & 57.76 & 115.86 & 41.94 \\
\bottomrule
\end{tabular}
\vspace{3pt}
\caption{Ablation study for \ourmethod\ on Qwen3-8B-Base under symmetric per-channel weight-only quantization at 4 bits (W4) and 3 bits (W3). PPL columns ($\downarrow$) report perplexity on WikiText-2, C4, and Penn Treebank; 0-shot ($\uparrow$) is average zero-shot accuracy. Dense and \texttt{GPTQ} are shown for reference. The same qualitative pattern holds at both precisions, but the impact of removing key components is substantially larger at 3 bits, particularly for diagonal preconditioning.}
\label{tab:admmq-ablation-qwen3-8b-w3w4}
\end{table}

\paragraph{Ablation Study on \ourmethod with SpinQuant.}

Table~\ref{tab:admmq-ablation-spinquant-w3w4} extends the ablation study to the weight-and-activation setting (W$b$A4KV4) with SpinQuant on LLaMA-3 8B and LLaMA-3.2 1B-Instruct. In this more constrained setting, local search and adaptive $\rho$ are the most impactful components, each causing substantial degradation when removed, particularly at W3 where removing either one roughly doubles the perplexity gap to the dense baseline on LLaMA-3 8B. Diagonal preconditioning provides a smaller but steady improvement across all configurations. The ``all off'' configuration degrades catastrophically at W3, reaching perplexities above 150 on the smaller model, confirming that all three components are essential for stable low-bit quantization in the weight-and-activation setting.

\begin{table}[H]
\centering
\smaller
\setlength{\tabcolsep}{2.8pt}
\renewcommand{\arraystretch}{1.12}
\newcolumntype{C}[1]{>{\centering\arraybackslash}p{#1}}
\resizebox{\textwidth}{!}{%
\begin{tabular}{l|C{2.6em}C{2.6em}C{2.6em}C{2.6em}|C{2.6em}C{2.6em}C{2.6em}C{2.6em}|C{2.6em}C{2.6em}C{2.6em}C{2.6em}|C{2.6em}C{2.6em}C{2.6em}C{2.6em}}
\toprule
& \multicolumn{4}{c|}{\cellcolor{blue!8}\textbf{LLaMA-3 8B — W4A4KV4}}
& \multicolumn{4}{c|}{\cellcolor{blue!8}\textbf{LLaMA-3 8B — W3A4KV4}}
& \multicolumn{4}{c|}{\cellcolor{blue!8}\textbf{1B-Inst.\ — W4A4KV4}}
& \multicolumn{4}{c}{\cellcolor{blue!8}\textbf{1B-Inst.\ — W3A4KV4}} \\
\textbf{Method}
& \rotatebox{50}{\textbf{Wiki-2}}
& \rotatebox{50}{\textbf{C4}}
& \rotatebox{50}{\textbf{PTB}}
& \rotatebox{50}{\textbf{0-shot}}
& \rotatebox{50}{\textbf{Wiki-2}}
& \rotatebox{50}{\textbf{C4}}
& \rotatebox{50}{\textbf{PTB}}
& \rotatebox{50}{\textbf{0-shot}}
& \rotatebox{50}{\textbf{Wiki-2}}
& \rotatebox{50}{\textbf{C4}}
& \rotatebox{50}{\textbf{PTB}}
& \rotatebox{50}{\textbf{0-shot}}
& \rotatebox{50}{\textbf{Wiki-2}}
& \rotatebox{50}{\textbf{C4}}
& \rotatebox{50}{\textbf{PTB}}
& \rotatebox{50}{\textbf{0-shot}} \\
\midrule
\rowcolor{gray!10}
Dense
& 6.14 & 9.44 & 14.84 & 63.94
& 6.14 & 9.44 & 14.84 & 63.94
& 13.16 & 21.30 & 37.31 & 53.22
& 13.16 & 21.30 & 37.31 & 53.22 \\
\texttt{GPTQ}
& 7.34 & 12.16 & 17.94 & \textbf{60.55}
& 8.50 & 15.01 & 21.27 & 56.51
& 15.81 & 26.11 & 48.69 & 48.27
& 19.70 & 34.07 & 65.22 & 45.76 \\
\midrule
\rowcolor{darkgreen!10}
\ourmethod\ (full)
& \textbf{7.26} & \textbf{12.08} & \textbf{17.74} & 60.52
& \textbf{8.02} & \textbf{14.29} & \textbf{20.13} & \textbf{57.59}
& \textbf{15.55} & \textbf{25.80} & \textbf{47.76} & \textbf{49.31}
& \textbf{17.88} & \textbf{31.16} & \textbf{56.65} & \textbf{46.84} \\
\quad w/o local search
& 7.97 & 13.09 & 18.93 & 58.32
& 10.93 & 20.91 & 29.32 & 51.06
& 19.45 & 31.18 & 58.28 & 46.20
& 28.12 & 48.41 & 100.83 & 42.74 \\
\quad w/o adaptive $\rho$
& 7.94 & 13.16 & 19.17 & 57.96
& 11.52 & 22.00 & 31.97 & 51.78
& 17.45 & 28.28 & 53.96 & 48.39
& 28.32 & 49.34 & 91.35 & 42.85 \\
\quad w/o diag.\ precond.
& 7.65 & 12.72 & 18.59 & 58.43
& 9.44 & 17.39 & 25.30 & 54.44
& 16.03 & 26.52 & 48.72 & 48.79
& 19.76 & 35.48 & 66.57 & 45.17 \\
\quad all off
& 8.89 & 14.43 & 20.98 & 57.12
& 56.35 & 82.98 & 174.21 & 40.46
& 21.87 & 33.58 & 65.31 & 45.71
& 150.16 & 184.86 & 396.32 & 37.80 \\
\bottomrule
\end{tabular}
}
\vspace{3pt}
\caption{Ablation study for \ourmethod\ within the SpinQuant pipeline on LLaMA-3 8B and LLaMA-3.2 1B-Instruct under symmetric per-channel W$b$A4KV4 quantization at 4 bits (W4) and 3 bits (W3). PPL columns ($\downarrow$) report perplexity on WikiText-2, C4, and Penn Treebank; 0-shot ($\uparrow$) is average zero-shot accuracy. Dense and \texttt{GPTQ} baselines are shown for reference.}
\label{tab:admmq-ablation-spinquant-w3w4}
\end{table}

\section{Other quantization formats and reasoning evaluations}
\label{app:nonuniform-reasoning}

\crnew{
\paragraph{ICQuant integration.}
Tables~\ref{tab:icquant-1p7b} and~\ref{tab:icquant-4b} report WikiText-2 / C4 perplexity when \ourmethod is combined with the ICQuant dual-grid format. RTN-IC and GPTQ-IC use the same outlier set; \ourmethod-IC differs only in the layer-wise solver.
}

\begin{table}[H]
\centering
\small
\begin{tabular}{llllll}
\toprule
\crnew{Bits} & \crnew{RTN-IC} & \crnew{GPTQ} & \crnew{GPTQ-IC} & \crnew{\ourmethod} & \crnew{\ourmethod-IC} \\
\midrule
\crnew{W2} & \crnew{2.8e8 / 1.7e8} & \crnew{527.19 / 1357.12} & \crnew{51.94 / 147.08} & \crnew{39.49 / 255.75} & \crnew{\textbf{18.40 / 50.88}} \\
\crnew{W3} & \crnew{16.58 / 25.65} & \crnew{23.54 / 51.02} & \crnew{11.99 / 20.09} & \crnew{16.79 / 38.58} & \crnew{\textbf{10.36 / 17.46}} \\
\crnew{W4} & \crnew{10.29 / 16.91} & \crnew{12.53 / 20.99} & \crnew{9.91 / 16.30} & \crnew{11.19 / 18.98} & \crnew{\textbf{9.62 / 15.90}} \\
\bottomrule
\end{tabular}
\caption{\crnew{Qwen3-1.7B-Base (FP16: 9.40 / 15.42) with ICQuant. Entries are WikiText-2 / C4.}}
\label{tab:icquant-1p7b}
\end{table}

\begin{table}[H]
\centering
\small
\begin{tabular}{llllll}
\toprule
\crnew{Bits} & \crnew{RTN-IC} & \crnew{GPTQ} & \crnew{GPTQ-IC} & \crnew{\ourmethod} & \crnew{\ourmethod-IC} \\
\midrule
\crnew{W2} & \crnew{4.2e7 / 3.3e7} & \crnew{33.60 / 95.05} & \crnew{22.78 / 43.91} & \crnew{23.90 / 73.17} & \crnew{\textbf{12.97 / 29.42}} \\
\crnew{W3} & \crnew{9.81 / 16.12} & \crnew{13.26 / 24.27} & \crnew{8.49 / 14.59} & \crnew{13.02 / 25.32} & \crnew{\textbf{8.42 / 14.51}} \\
\crnew{W4} & \crnew{8.44 / 14.00} & \crnew{8.52 / 14.59} & \crnew{8.05 / 13.61} & \crnew{8.48 / 14.58} & \crnew{\textbf{8.01 / 13.58}} \\
\bottomrule
\end{tabular}
\caption{\crnew{Qwen3-4B-Base (FP16: 7.90 / 13.31) with ICQuant. Entries are WikiText-2 / C4.}}
\label{tab:icquant-4b}
\end{table}

\crnew{
\paragraph{SqueezeLLM integration.}
Tables~\ref{tab:squeezellm-w3} and~\ref{tab:squeezellm-w2} report results when \ourmethod is combined with the SqueezeLLM non-uniform codebook and dense-and-sparse mask. Both methods use the same codebook and sparse mask; they differ only in how dense entries are quantized.
}

\begin{table}[H]
\centering
\small
\begin{tabular}{llll}
\toprule
\crnew{Model} & \crnew{FP16} & \crnew{SqueezeLLM} & \crnew{\ourmethod+SqueezeLLM} \\
\midrule
\crnew{1.7B} & \crnew{9.40 / 15.42 / 62.54} & \crnew{11.58 / 18.75 / 57.75} & \crnew{\textbf{10.32 / 17.51 / 58.13}} \\
\crnew{4B} & \crnew{7.90 / 13.31 / 66.72} & \crnew{9.13 / 15.05 / 63.64} & \crnew{\textbf{8.41 / 14.50 / 65.27}} \\
\crnew{8B} & \crnew{7.00 / 11.91 / 69.07} & \crnew{7.78 / 13.10 / 67.52} & \crnew{\textbf{7.39 / 12.81 / 67.69}} \\
\crnew{14B} & \crnew{6.38 / 10.99 / 71.75} & \crnew{6.96 / 11.84 / 70.38} & \crnew{\textbf{6.70 / 11.73 / 70.87}} \\
\crnew{32B} & \crnew{7.61 / 12.45 / 71.49} & \crnew{8.28 / 13.39 / 71.33} & \crnew{\textbf{7.94 / 13.20 / 71.57}} \\
\bottomrule
\end{tabular}
\caption{\crnew{Qwen3 W3 non-uniform quantization with SqueezeLLM. Entries are WikiText-2 / C4 / zero-shot accuracy.}}
\label{tab:squeezellm-w3}
\end{table}

\begin{table}[H]
\centering
\small
\begin{tabular}{llll}
\toprule
\crnew{Model} & \crnew{FP16} & \crnew{SqueezeLLM} & \crnew{\ourmethod+SqueezeLLM} \\
\midrule
\crnew{1.7B} & \crnew{9.40 / 15.42 / 62.54} & \crnew{154.13 / 240.10 / 42.02} & \crnew{\textbf{17.28 / 47.34 / 46.49}} \\
\crnew{4B} & \crnew{7.90 / 13.31 / 66.72} & \crnew{22.81 / 35.60 / 51.33} & \crnew{\textbf{11.80 / 26.06 / 55.02}} \\
\crnew{8B} & \crnew{7.00 / 11.91 / 69.07} & \crnew{15.07 / 24.94 / 57.07} & \crnew{\textbf{9.66 / 19.16 / 58.55}} \\
\crnew{14B} & \crnew{6.38 / 10.99 / 71.75} & \crnew{10.73 / 17.78 / 60.91} & \crnew{\textbf{8.37 / 16.21 / 62.83}} \\
\crnew{32B} & \crnew{7.61 / 12.45 / 71.49} & \crnew{12.82 / 19.94 / 66.51} & \crnew{\textbf{10.07 / 17.64 / 68.25}} \\
\bottomrule
\end{tabular}
\caption{\crnew{Qwen3 W2 non-uniform quantization with SqueezeLLM. Entries are WikiText-2 / C4 / zero-shot accuracy.}}
\label{tab:squeezellm-w2}
\end{table}

\crnew{
\paragraph{Reasoning benchmarks.}
Tables~\ref{tab:reasoning-w3} and~\ref{tab:reasoning-w4} report AIME-120, MATH-500, and GSM8K accuracy for Qwen3-8B under asymmetric W3/W4 group-128 weight-only quantization, following~\citet{liu2025quantizationhurtsreasoningempirical}.
}

\begin{table}[H]
\centering
\small
\begin{tabular}{lcccc}
\toprule
\crnew{Dataset} & \crnew{FP16} & \crnew{\ourmethod} & \crnew{GPTQ} & \crnew{AWQ} \\
\midrule
\crnew{AIME-120} & \crnew{69.17 $\pm$ 4.20} & \crnew{\textbf{63.33 $\pm$ 4.37}} & \crnew{57.50 $\pm$ 4.50} & \crnew{43.33 $\pm$ 4.52} \\
\crnew{MATH-500} & \crnew{97.20 $\pm$ 0.74} & \crnew{\textbf{95.60 $\pm$ 0.92}} & \crnew{94.80 $\pm$ 0.99} & \crnew{91.80 $\pm$ 1.23} \\
\crnew{GSM8K} & \crnew{95.15 $\pm$ 0.59} & \crnew{\textbf{94.24 $\pm$ 0.64}} & \crnew{94.16 $\pm$ 0.65} & \crnew{93.56 $\pm$ 0.68} \\
\bottomrule
\end{tabular}
\caption{\crnew{Qwen3-8B reasoning accuracy at W3G128.}}
\label{tab:reasoning-w3}
\end{table}

\begin{table}[H]
\centering
\small
\begin{tabular}{lcccc}
\toprule
\crnew{Dataset} & \crnew{FP16} & \crnew{\ourmethod} & \crnew{GPTQ} & \crnew{AWQ} \\
\midrule
\crnew{AIME-120} & \crnew{69.17 $\pm$ 4.20} & \crnew{\textbf{71.67 $\pm$ 4.11}} & \crnew{64.17 $\pm$ 4.31} & \crnew{68.33 $\pm$ 4.24} \\
\crnew{MATH-500} & \crnew{97.20 $\pm$ 0.74} & \crnew{97.40 $\pm$ 0.71} & \crnew{\textbf{97.60 $\pm$ 0.68}} & \crnew{96.80 $\pm$ 0.79} \\
\crnew{GSM8K} & \crnew{95.15 $\pm$ 0.59} & \crnew{94.77 $\pm$ 0.61} & \crnew{95.07 $\pm$ 0.60} & \crnew{\textbf{95.68 $\pm$ 0.56}} \\
\bottomrule
\end{tabular}
\caption{\crnew{Qwen3-8B reasoning accuracy at W4G128.}}
\label{tab:reasoning-w4}
\end{table}


\end{document}